\definecolor{darkblue}{rgb}{0, 0, 0.5}
\def\eqref#1{equation~\ref{#1}}
\def\1{\bm{1}}
\DeclareMathAlphabet{\mathsfit}{\encodingdefault}{\sfdefault}{m}{sl}
\SetMathAlphabet{\mathsfit}{bold}{\encodingdefault}{\sfdefault}{bx}{n}
\def\sG{{\mathbb{G}}}
\def\sK{{\mathbb{K}}}
\def\sR{{\mathbb{R}}}
\def\sS{{\mathbb{S}}}
\newcommand{\R}{\mathbb{R}}
\DeclareMathOperator*{\argmin}{arg\,min}
\newtheorem{defn}{Definition}
\newcommand{\T}{\mathcal{T}}
\newcommand{\Set}{\mathcal{S}}
\newcommand{\red}[1]{\textcolor{red}{#1}}
\newcommand{\blue}[1]{\textcolor{blue}{#1}}
\newcommand{\inner}[1]{\left\langle #1 \right \rangle} 
\newcommand{\wt}[1]{\widetilde{#1}}
\def\iid{\overset{\textup{i.i.d.}}{\sim}}
\def\Exp{\mathbb{E}}
\def\eigen{D}
\def\iid{\overset{\textup{i.i.d.}}{\sim}}
\newtcolorbox{todobox}{
  colback=yellow!10,
  colframe=red!75!black,
  fonttitle=\bfseries
}
\newcommand{\Acc}{\text{Acc}}
\DeclareMathOperator*{\tr}{tr}
\DeclareMathOperator*{\rank}{rank}
\DeclareMathOperator*{\diag}{diag}
\theoremstyle{plain}
\newtheorem{assumption}{Assumption}
\newtheorem{lemma}{Lemma}[section]
\theoremstyle{definition}
\theoremstyle{remark}
\title{Do Large Language Models Have Compositional Ability? An Investigation into Limitations and Scalability}
\author{Zhuoyan Xu\thanks{Equal contribution}, ~ Zhenmei Shi\footnotemark[1], ~ Yingyu Liang \\
University of Wisconsin–Madison \\
\texttt{\{zxu444,zhmeishi,yliang\}@cs.wisc.edu }
}
\begin{document}
\maketitle

\begin{abstract}
Large language models (LLMs) have emerged as powerful tools for many AI problems and exhibit remarkable in-context learning (ICL) capabilities. Compositional ability, solving unseen complex tasks that combine two or more simple tasks, is an essential reasoning ability for Artificial General Intelligence. 
Despite the tremendous success of LLMs, how they approach composite tasks, especially those not encountered during the pretraining phase, remains an open and largely underexplored question.
In this study, we delve into the ICL capabilities of LLMs on composite tasks, with only simple tasks as in-context examples. We develop a test suite of composite tasks including linguistic and logical challenges and perform empirical studies across different LLM families. We observe that models exhibit divergent behaviors: (1) For simpler composite tasks that apply distinct mapping mechanisms to different input segments, the models demonstrate decent compositional ability, while scaling up the model enhances this ability; (2) for more complex composite tasks involving reasoning multiple steps, where each step represents one task, models typically underperform, and scaling up generally provides no improvements. 
We offer theoretical analysis in a simplified setting, explaining that models exhibit compositional capability when the task handles different input parts separately.
We believe our work sheds new light on the capabilities of LLMs in solving composite tasks regarding the nature of the tasks and model scale. Our dataset and code are available at {\url{https://github.com/OliverXUZY/LLM_Compose}}.
\end{abstract}

\section{Introduction}

In recent years, large language models (LLMs) have revolutionized the natural language processing (NLP) and general AI community. Recent advances, including ChatGPT~\citep{chatgpt}, GPT4~\citep{openai2023gpt4}, and Claude 3~\citep{claude3} have shown success in various fields. As model scale increases, larger models exhibit new behavior known as emergence ability. One remarkable emergence is the in-context learning ability (ICL) \citep{brown2020language}, where a model can solve new tasks given only a few examples as input, without any parameter updates. 
However, despite recent success, how LLMs solve complex reasoning tasks, particularly not seen in pre-training, remains an open question and largely lacks understanding.

In this paper, we focus on the problem of how LLMs tackle composite tasks that incorporate multiple simple tasks. Specifically, we investigate whether a model trained and in-context learned on individual tasks can effectively integrate these skills to tackle combined challenges, which are intuitive and simple for humans. For instance, in \Cref{fig:dia}, if a human is given examples where words following an asterisk (*) will be capitalized and words surrounded by parenthesis will be permuted, one can also understand words following an asterisk (*) surrounded by parenthesis will be capitalized and permuted simultaneously. This basic generalization seems trivial, yet we observe that LLMs fail to generalize in this way.

\begin{wrapfigure}
{r}{0.5\textwidth}
\begin{center}
\includegraphics[width=1.0\linewidth]{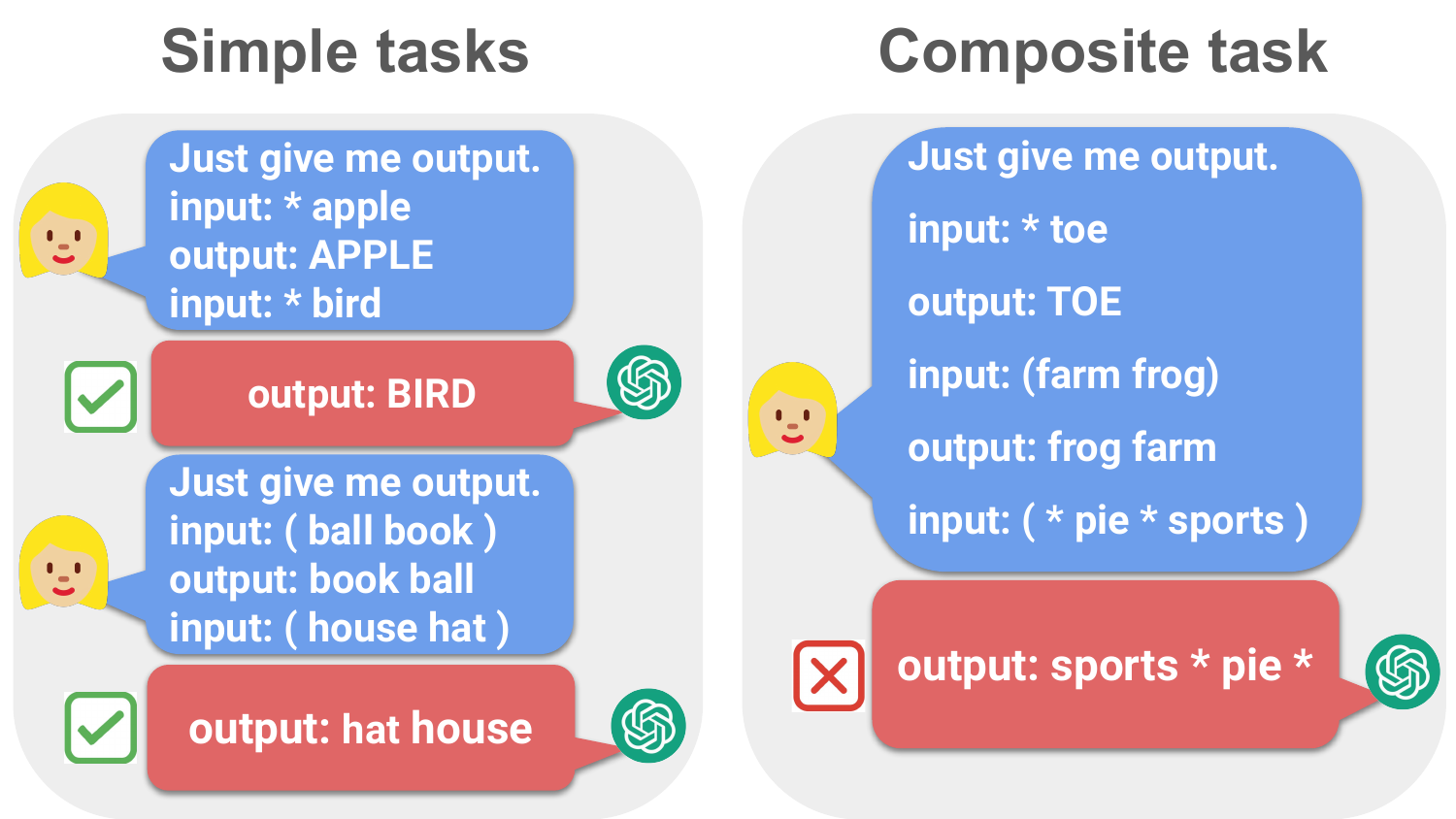}
\end{center}
\caption{
Inconsistent performance in GPT-4. Consider two simple tasks: If a word is followed by an asterisk (*), capitalize the letter. If two words are surrounded by parentheses, swap the positions.
GPT-4 correctly solves two simple tasks based on demonstrations (left). The composite tasks have test inputs with both asterisk (*) and parenthesis. The correct answer should be \textit{output: SPORTS PIE}. However, GPT-4 fails to solve the composite tasks (right). The same failure was observed in Claude 3.}
\label{fig:dia}
\end{wrapfigure}

Compositional ability is an active problem in the AI community and is crucial for advancing Artificial General Intelligence (AGI). Recent studies have made significant contributions to the understanding of this area. 
\citet{dziri2023faith} formulate compositional tasks as computation graphs to quantify each task's complexity level. 
\citet{power2022grokking} show that models may develop generalization capabilities when trained extensively, beyond the point of overfitting, highlighting a phenomenon known as ``grokking''.
\citet{an2023context} examines how LLMs acquire abstract reasoning and achieve compositional generalization in a linguistic context through ICL by testing LLMs on tasks that involve translating a formal language with a custom grammar. Although these studies offer insight, how LLMs compose tasks together is still not fully understood, especially in the ICL setting. Moreover, the absence of a solid theoretical framework in these discussions needs to be investigated concerning the underlying mechanisms of such behaviors.

Inspired by these seminal works, we further evaluate LLMs on a series of compositional tasks through ICL. The models were presented with examples of simple tasks and then asked to tackle composite tasks that they had not encountered during pretraining or in-context learning. We observe various behaviors: (1) for some composite tasks, the models showed a reasonable level of compositional skill, a capability that improved with larger model sizes; (2) for more complex composite tasks requiring sequential reasoning, the model struggle, and increasing the model size typically did not lead to better performance.

Our key intuition is that if the simple tasks that form a composite task can be easily separated into subtasks based on the inputs (e.g., performed separately on different parts of the input sentence), the model is more likely to complete such a composite task successfully (we call it ``a separable composite task'').
The performance of the model depends on how it connects and uses the information given for each part of the task. To clarify this insight, we present theoretical analyses in a simplified setting and provide key insights into conditions needed for success in such separable composite tasks.

Our contributions are twofold. 
\textbf{Empirically}, we introduce a variety of composite tasks from both the linguistic and logical domains to explore how the nature of these tasks influences the compositional performance of LLMs through ICL. 
\textbf{Theoretically}, we provide analysis on a simple yet insightful model: a one-layer single-head linear self-attention network~\citep{pmlr-v202-von-oswald23a, rek2023what, mahankali2023one, zhang2023trained}. 
This framework allows us to demonstrate a clear separation in input embedding, effectively breaking down composite tasks into simpler components. We delve into the scaling of language models by examining the structure of the key and query matrices in the attention mechanism, arguing that larger models with a more complex internal structure exhibit enhanced performance on individual tasks, thereby improving their overall compositional capabilities on such separable tasks. 
\subsection{Related Work}

We briefly summarize related work and provide a detailed discussion in \Cref{app:related}.

LLMs are often Transformer-based \citep{vaswani2017attention} equipped with the enormous size of parameters and pretrained on vast training data. One key property that makes such LLM successful is called \text{scaling law}: Increasing the scale of language models (pretraining data scale, model parameters) can lead to better performance in downstream tasks. One ability that emerges as the model scale increases is in-context learning (ICL)~\citep{brown2020language}. Given a sequence of labeled examples and a testing example (combined as a prompt), the model can construct new predictors for testing examples without further parameter updates~\cite {dong2022survey}. Scaling law was first proposed by \citet{kaplan2020scaling} and then followed up by \citet{hoffmann2022training}, emphasizing both the scale of models and training data. 
Recent works show LLMs with larger scales have distinct behaviors compared to smaller language models~\citep{wei2023larger, shi2023why}. This work investigates experiments and analyses how LLM can exhibit compositional ability in ICL.

Solving complex tasks and reasoning is an active problem in the AI community~\cite {huang2022towards}. 
There is a line of empirical works investigating the compositional ability in linguistic fashion~\citep{kim-linzen-2020-cogs,levy2022diverse,an2023does,an2023context}. LLMs are capable of learning abstract reasoning (e.g., grammar) to perform new tasks when finetuned or given suitable in-context examples. In our work, we include linguistic experiments as part of our testing suite, illustrating LLMs' compositional ability. \citet{ye2023compositional, berglund2023reversal, dziri2023faith} show LLMs will struggle to solve tasks requiring reasoning. \citet{berglund2023reversal} studies that LLMs trained on ``A is B'' fail to learn ``B is A''. In our work, we conduct similar experiments showing LLMs will fail on composite if different steps of logical rules are mixed.

\section{Warm-up: A Failure Case for Composition} \label{sec:warm-up}

Our goal is to understand the behavior of LLMs on compositional reasoning tasks. As a warm-up, we evaluate the \textbf{Capitalization \& Swap} tasks (\Cref{fig:dia}) on different models.
Recall the tasks: given words of common objects, \textbf{*} represents the operation of capitalizing the letters, \textbf{()} represents swapping the positions of the two words.
We consider the standard in-context learning setting, which concatenates input-output examples $K=10$ and one test input as the prompt for LLM. We perform experiments across various LLM families, e.g., Llama families \citep{touvron2023llama} and GPTs~\citep{radford2019language, gpt-neo}, see model details in \Cref{app:sec:log}.  

\begin{wraptable}{r}{9cm}
\begin{center}
\resizebox{1.0\linewidth}{!}
{
\begin{tabular}{l l l}
\toprule
& \textbf{ Composite}   & \textbf{ Composite in-context}\\
\midrule[0.5pt]
 Prompt
&
\begin{minipage}[t]{3.0cm}
\footnotesize{\textit{input: * apple \\
output: {APPLE} \\
input: {( farm frog )} \\
output: {frog farm} \\
input: {( * bell * ford )}
}}
\end{minipage}
&
\begin{minipage}[t]{4cm}
\footnotesize{\textit{input: ( * good * zebra ) \\
output: {ZEBRA GOOD} \\
input: {( * bicycle * add )}
}}
\end{minipage} \\
\midrule
 Truth
&
\begin{minipage}[t]{3.6cm}
\footnotesize{ \textit{output: FORD BELL}}
\end{minipage}
&
\begin{minipage}[t]{4cm}
\footnotesize{\textit{output: ADD BICYCLE}}
\end{minipage} \\

\bottomrule
\end{tabular}
}
\end{center}
\caption{ Examples of two settings on composite tasks. Composite: in-context examples are about simple tasks, while the test input is about the composite task. Composite in-context: both in-context examples and the test input are about the composite task.
}
\label{tab:upper_swap_example}
\end{wraptable}

\textbf{Evaluation settings.} To make thorough evaluations, we consider four settings: (1) capital: only on the capitalization task; (2) swap: only on swap; (3) {composite}: in-context examples are from simple tasks while the test input is about the composite task; (4) {composite in-context}: in-context examples and the test input are all drawn from the composite task. The composite in-context setting reduces the evaluation to another simple task, not requiring the model to composite the simple task ability but directly learning from the in-context examples. It serves as the gold standard performance for the composite task. See \Cref{tab:upper_swap_example} for illustration.

\textbf{Results.} In \Cref{fig:capital_swap}, somewhat surprisingly, we observe that LLMs cannot solve the composite task, although they perform well on simple tasks. There is a significant gap between the performance in these settings. Models in Llama families can solve capital and swap with nearly $\sim$90\% accuracy but only achieve around 20\% or below on the composite task. We also observe that composite in-context examples will significantly improve the performance: The accuracy of Llama families can go up to match the simple task accuracy. These observations show that \emph{the models fail to compose the knowledge from the simple tasks, although they do have the representation power to solve the composite task} (which can only be exploited when provided composite in-context examples) and \emph{scaling up may not help}. 

\begin{figure*}[ht]
\begin{center}
\includegraphics[width=0.96\linewidth]{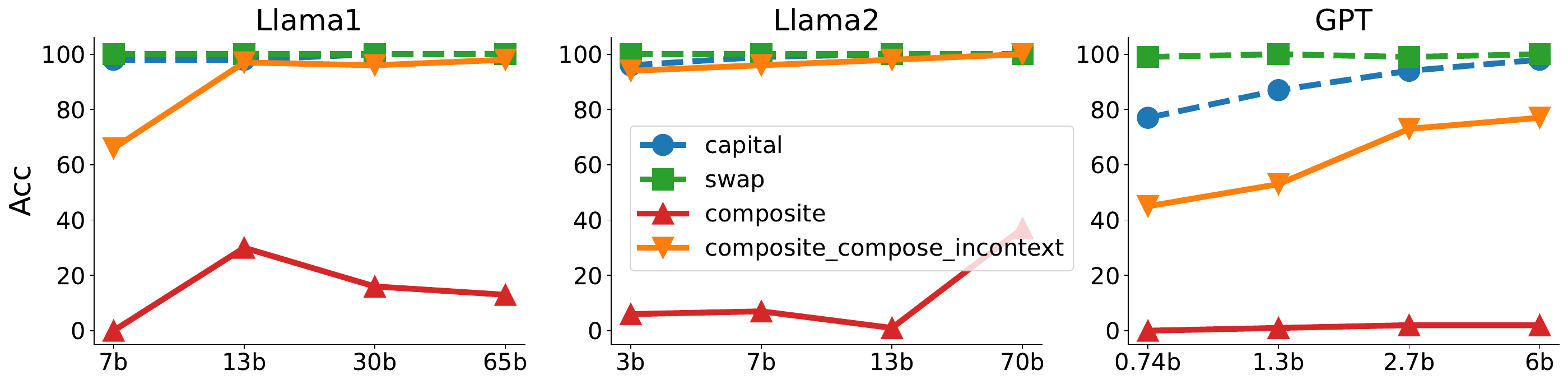}
\end{center}
\caption{The exact match accuracy ($y$-axis) vs the model scale ($x$-axis, ``b'' stands for billion) for \textbf{Capitalization \& Swap} tasks (example in \Cref{fig:dia}). Line \emph{capital}: performance on the simple task of capitalization; \emph{swap}: on the simple task of swap; \emph{composite}: in-context examples are from simple tasks while test input from the composite task. \emph{composite incontext}: in-context examples and test input are all from the composite task (example in \Cref{tab:upper_swap_example}).
}
\label{fig:capital_swap}
\end{figure*}
\section{Variability of Compositional Performance}
The experiment on Capitalization \& Swap shows failure cases while existing studies reported some successful composite abilities~\citep{levy2022diverse,an2023context}. 
This observation suggests a more refined perspective: LLMs exhibit variable compositional abilities, excelling in certain composite tasks while struggling in others. This section expands our exploration to additional composite tasks to further examine and understand this variability.

We introduce more composite tasks, including linguistic and logical challenges, wrapped as a testing suite. Similar to the Capitalization \& Swap experiment, we design composite tasks that compose two simple tasks and evaluate the model in four settings: the two simple tasks, the composite setting, and the composite in-context setting (\Cref{tab:upper_swap_example} show examples for the latter two). We consider two kinds of task: logical rules and linguistic translation. We first choose two simple tasks and compose them to construct a composite task. 

To address concerns about data leakage and the possibility that models encounter similar tasks during pretraining, we opt for synthetic data in this work. While it is challenging to guarantee that test data has never been seen during pretraining, we take significant steps to mitigate this risk. Specifically, we construct our compositional test data using a unique syntax and mapping mechanism. This approach substantially shifts the data distribution away from existing web-scale data, making it highly improbable that our test data has been encountered during pretraining. By doing so, we aim to create novel composite tasks that comprehensively evaluate the models' compositional abilities.

\Cref{subsec:logical_rule} investigates logical tasks and \Cref{subsec:translation} investigates translation tasks.

We perform experiments to answer the following questions: (\textbf{Q1}) How do LLMs perform in various tasks, where models might perform well in some scenarios while failing in others?  (\textbf{Q2}) Does scaling up the model help in general? (\textbf{Q3}) Is the variability in performance relevant to the nature of tasks?
Our experiments provide the following answers: (\textbf{A1}) A pattern of variable performance is observable across a range of composite tasks. (\textbf{A2}) Scaling-up helps when the model exhibits compositional ability for certain tasks but may not help when the model initially struggles. (\textbf{A3}) In tasks that involve processing inputs from varied segments or perspectives, especially simpler ones, the model tends to demonstrate compositional capabilities.

\subsection{Composite Logical Rules}
\label{subsec:logical_rule}

We enhance our suite of logical tasks by introducing a series of straightforward tasks that process either simple words or numerical values, with the output being a specific functional transformation of the input. These tasks are detailed in \Cref{tab:logical_example_simple}. 
\begin{table}[ht]
\begin{center}
\resizebox{0.7\linewidth}{!}
{
\begin{tabular}{clll}
\toprule
\textbf{ Tasks} & \textbf{ Task} & \textbf{ Input}   & \textbf{ Output}\\
\midrule[0.5pt]
\textbf{Words} & 
\footnotesize	{(A) Capitalization}
&
\begin{minipage}[t]{2.9cm}
\footnotesize	{apple}
\end{minipage}
&
\footnotesize	{APPLE}
\\ 
\cmidrule{2-4}
& 
\footnotesize	{(B) Swap}
&
\footnotesize	{bell ford}
&
\footnotesize	{ford bell}
\\ 
\cmidrule{2-4}
 & 
\footnotesize	{(C) Two Sum}
&
\footnotesize	{twenty @ eleven}
&
\footnotesize	{thirty-one}
\\ 
\cmidrule{2-4}
& 
\footnotesize	{(D) Past Tense}
&
\footnotesize	{pay}
&
\footnotesize	{paid}
\\
\cmidrule{2-4}
 & 
\footnotesize	{(E) Opposite}
&
\footnotesize	{Above}
&
\footnotesize	{Below}
\\ 
\midrule
\textbf{Numerical} & 
\footnotesize	{(F) Plus One}
&
\footnotesize	{435}
&
\footnotesize	{436} 
\\
\cmidrule{2-4}
& 
\footnotesize	{(G) Modular}
&
\footnotesize	{15 @ 6}
&
\footnotesize	{3} 
\\
\cmidrule{2-4}

& 
\footnotesize	{(H) Two Sum Plus One}
&
\footnotesize	{12 \# 5}
&
\footnotesize	{18} 
\\
\bottomrule
\end{tabular}
}
\end{center}
\caption{ This table contains a collection of simple logical tasks. The \textit{Words} category encompasses tasks that modify words at the character or structural level. The \textit{Numerical} category is devoted to tasks that involve arithmetic computations performed on numbers.
}
\label{tab:logical_example_simple}
\end{table}

\begin{wraptable}{r}{9cm}\
\begin{center}
\resizebox{1\linewidth}{!}
{
\begin{tabular}{clll}
\toprule\
\textbf{ Tasks} & \textbf{ Simple Task} & \textbf{ Simple Task}   & \textbf{ Composite}\\
\midrule[0.5pt]
\textbf{(A) + (B)} & 
\begin{minipage}[t]{2.1cm}
\footnotesize	{input: * apple \\
output: APPLE}
\end{minipage}
&
\begin{minipage}[t]{2.6cm}
\footnotesize	{input: ( farm frog ) \\
output: frog farm}
\end{minipage}
&
\begin{minipage}[t]{2.9cm}
\footnotesize	{input: ( * bell * ford ) \\
output: FORD BELL}
\end{minipage} \\ 
\midrule
\textbf{(A) + (F)} & 
\begin{minipage}[t]{2.0cm}
\footnotesize	{input: 435 \\
output: 436}
\end{minipage}
&
\begin{minipage}[t]{2.2cm}
\footnotesize	{input: cow \\
output: COW}
\end{minipage}
&
\begin{minipage}[t]{2.9cm}
\footnotesize	{input: 684 cat \\
output: 685 CAT}
\end{minipage} \\ 

\bottomrule
\end{tabular}
}
\end{center}
\caption{ Examples of the two logical composite tasks. Full examples can be found in \Cref{app:sec:log}.
}
\label{tab:logical_example_compose}\
\end{wraptable}

Composite tasks are created by merging two simple tasks. We conceptualize simple tasks as functions, $f(\cdot)$ and $g(\cdot)$ that map inputs to their respective outputs. We identify two distinct approaches to creating composite tasks: \textbf{(1) Compose by parts}: For inputs $x,y$, the result is $f(x),g(y)$. One example is \textbf{(A) + (F)} in \Cref{tab:logical_example_compose}. If a numerical number is given, it will increment by one; if the word is given, the letters will be capitalized; if both are given, perform both operations. \textbf{(2) Compose by steps}: Given input $x$, the result is $f(g(x))$. One example is \textbf{(A) + (B)} in \Cref{tab:logical_example_compose}.
We use customized symbols as function mapping for composing two simple tasks. Examples are in \Cref{fig:dia} and \Cref{tab:logical_example_compose}. Following existing work, we use exact match accuracy to evaluate the performance since the output for these tasks is usually simple and short.

\textbf{Results.} We provide our main results on composite tasks in \Cref{tab:logical_results_main}. For the composed by parts tasks \textbf{(A) + (F)} and  \textbf{(D) + (F)}, the models show strong compositional ability: the composite accuracy is high, improves with increasing scale, and eventually reaches similar performance as the ``gold standard'' composite in-context setting, as highlighted in red numbers. We refer to these tasks as ``separable composite tasks'', which are relatively easy for the model to solve. On the compose-by-step tasks, we observe the models have various performances.  For composite tasks with sequential reasoning steps, the models exhibit various performances. For tasks involving capitalization \textbf{(A)} or swap \textbf{(B)}, the model has poor performance on a small scale (7b or lower) but has increased performance in increased model scale, such as 44\% accuracy in \textbf{(A) + (C)} and 66\% accuracy in \textbf{(B) + (D)}. One exception is Llama1-65b, which has lower accuracy than a smaller-scale model. We conjecture it is due to some unknown inductive bias during the pretraining. On composite steps tasks involving the arithmetic calculation of numerical numbers \textbf{(G) + (H)}, the model has the worst performance, and increasing the model scale does not provide benefits. A key observation is that compose-by-part tasks are separable compositions where the input can be broken down into two distinct segments. Such tasks are typically straightforward for a model to address. In all experiments, providing composed examples as in-context demonstrations will help the model understand and solve the composite tasks well, such as \textbf{Com. in-context} rows in all task combinations. We conclude that models fail to compose mechanisms of two simple tasks together; however, given composite examples, models can learn the composed mechanism efficiently. We also experimented with prompt demonstrations and found instructions provide no direct results; see more experimental details in \Cref{app:subsec:log_exp_setup}. See more experimental results (including Llama3~\citep{llama3}) and visualizations in \Cref{app:subsec:log_result}.

\begin{table}[ht]
\begin{center}
\resizebox{1.0\linewidth}{!}{
\begin{tabular}{lc|cc|ccc|cccc}
\toprule
 & \multicolumn{1}{c|}{\textbf{}} & \multicolumn{2}{c|}{\textbf{Mistral}} & \multicolumn{3}{c|}{\textbf{Llama2}} & \multicolumn{4}{c}{\textbf{Llama1}}\\
 
 & \textbf{Tasks} & \textbf{7B} & \textbf{8x7B} & \textbf{7B} & \textbf{13B} & \textbf{70B} & \textbf{7B} & \textbf{13B} & \textbf{30B} & \textbf{65B}\\
\midrule[0.5pt]
\textbf{(A) + (B)}   &Capitalization&  99 & 98 & 99& 100 & 100 & 98 & 98 & 100 & 100 \\
    & swap & 100 & 100 & 100 & 100 & 100 & 100 & 100 & 100 & 100 \\
 & Compose & 16 & 42 & 7 & 1 & 37 & 0 & 30 & 16 & 13 \\
  & Com. in-context & {95} & {96} & {96} & {98} & 100 & {66} & {97} & {96} & {98} \\
\cmidrule{1-11}
\textbf{(A) + (C)}   &twoSum&  71 & 100 & 72& 93 & 99 & 62 & 56 & 98 & 99 \\
    & Capitalization & 98 & 99 & 100 & 95 & 99 & 97 & 98 & 99 & 99 \\
 & Compose & 8 & 19 & 3 & 23 & 44 & 3 & 3 & 31 & 2 \\
  & Com. in-context & {31} & {65} & {52} & {77} & 100 & {9} & {22} & {93} & {69} \\
\cmidrule{1-11}
\textbf{(A) + (F)} &Capitalization& 97 & 99 & 98 & 77 & 99 & 84 & 96 & 99 &  98 \\
 & PlusOne & 100 & 99 & 100 & 100 & 100 & 100 & 100 & 100 & 100 \\
  & Compose & 92& \red{96} & 74 & 69 & \red{97} & 57 & 60 & 69 & \red{99} \\
  & Com. in-context & {99} & {98} & {99} & 100 & {100} & {99} & {99} & {100} & {100} \\
\cmidrule{1-11}
\textbf{(B) + (D)} &Swap& 100 & 100 & 100 & 100 & 100 & 100 & 100 & 100 & 100 \\
 & Past Tense & 97 & 99 & 97 & 100 & 99 & 97 &  98 & 100 & 100 \\
  & Compose & 6& 12 & 0 & 1 & 62 & 57 &  34 & 46 & 5 \\
  & Com. in-context & {92} & 98 & {86} & 95 & 98 & 86 & 95 & 89 & 94 \\
\cmidrule{1-11}

\textbf{(B) + (E)} &Swap& 100 & 100 & 100 & 100 & 100 & 100 & 100 & 100 & 100 \\
 & Opposite & 61 & 62 & 58 & 68 & 65 & 51 &  58 & 64 & 63 \\
  & Compose & 0& 0 & 0 & 0 & 0 & 0 &  0 & 0 & 0 \\
  & Com. in-context & {35} & {32} & {12} & 37 & 37 & 0 & 9 & 7 & 9 \\
\cmidrule{1-11}

\textbf{(D) + (F)} &Past Tense& 100 & 100 & 98 & 100 & 100 & 100 & 100 & 100 & 100 \\
 & Plus One & 100 & 100 & 100 & 100 & 100 & 99 &  100 & 100 & 100 \\
  & Compose & 71& 46 & 32 & 80 & \red{80} & 40 &  44 & 14 & \red{74} \\
  & Com. in-context & {98} & {100} & {98} & 99 & 100 & 95 & 96 & 98 & 100 \\
\cmidrule{1-11}

\textbf{(G) + (H)} & Modular & 25 & 22 & 5 & 23 & 43 & 9 & 16 & 29 & 29 \\
 & twoSumPlus & 38 & 42 & 3 & 77 & 90 & 14 & 10 & 40 & 87 \\
   & Compose & 4 & 5 & 0 & 1 & 1 & 0 & 0 & 0 & 5 \\
  & Com. in-context & {4} & {8} & {13} & {13} & {12} & {11} & {13} & {7} & {12} \\
\bottomrule

\end{tabular}
}
\end{center}
\caption{Results are evaluated composite tasks on various models. The accuracy is in \%.
}
\vspace{-0.1in}
\label{tab:logical_results_main}
\end{table}

\subsection{Composite Linguistic  Translation}\label{subsec:translation}
Inspired by previous work in compositional generalization~\citep{an2023context,levy2022diverse,an2023does,kim-linzen-2020-cogs}, here we design our composite tasks by formal language translation tasks.

Our translation tasks are mainly derived from semantic parsing task COGS \citep{kim-linzen-2020-cogs} and compositional generalization task COFE \cite{an2023context}. These two datasets contain input as natural English sentences and output as a chain-ruled sentence following a customized grammar (see details in \Cref{app:subsec:Translation Tasks}). We construct two composite tasks centered on compositional generalization utilizing the training datasets to create in-context examples.
See details in \Cref{app:subsec:Translation Tasks}.

We use the word error rate (WER) as the metric. It measures the minimum number of editing operations (deletion, insertion, and substitution) required to transform one sentence into another and is common for speech recognition or machine translation evaluations. 

\textbf{(T1) Phrase Recombination with Longer Chain.} COFE proposed two compositional generalization tasks (Figure 2 in \citet{an2023context}). \textit{Phrase Recombination}: integrate a prepositional phrase (e.g., ``A in B'') into a specific grammatical role (e.g., ``subject'', ``object''); \textit{Longer Chain}: Extend the tail of the logical form in sentences. We see them as simple tasks, and merge them to form a composite task: substitute the sentence subject in the Longer Chain task with a prepositional phrase from the Phrase Recombination task. Details and examples are in \Cref{tab:pr_lc_examples} of \Cref{app:subsec:Translation Tasks}.

\textbf{(T2) Passive to Active and Object to Subject Transformation.} We consider two tasks from \citet{kim-linzen-2020-cogs}. \textit{Passive to Active}: Transitioning sentences from passive to active voice. \textit{Object to Subject}: Changing the same object (a common noun) from objective to subjective. They are merged to form our composite task, where both transformations are applied simultaneously to the input sentence. Details and examples are in \Cref{tab:pa_os_examples} of \Cref{app:subsec:Translation Tasks}.

\textbf{Results.} 
\Cref{fig:pr_lc} shows that LLMs can handle these composite tasks. 
The WER on the composite task is decent
and improves with
increasing model scale, particularly in Llamma2 models. These confirm the composite abilities of the models in these tasks.

Here, we notice that both composite tasks are separable composite tasks. If we break down these sentences into sub-sentences and phrases, the simple task operations occur in different parts or perspectives of the input sentences. So, the results here provide further support for composite abilities on separable composite tasks, where simple tasks that form the composite task are related to inputs from different parts or perspectives.

\begin{wrapfigure}
{r}{0.6\textwidth}
\begin{center}
\includegraphics[width=\linewidth]{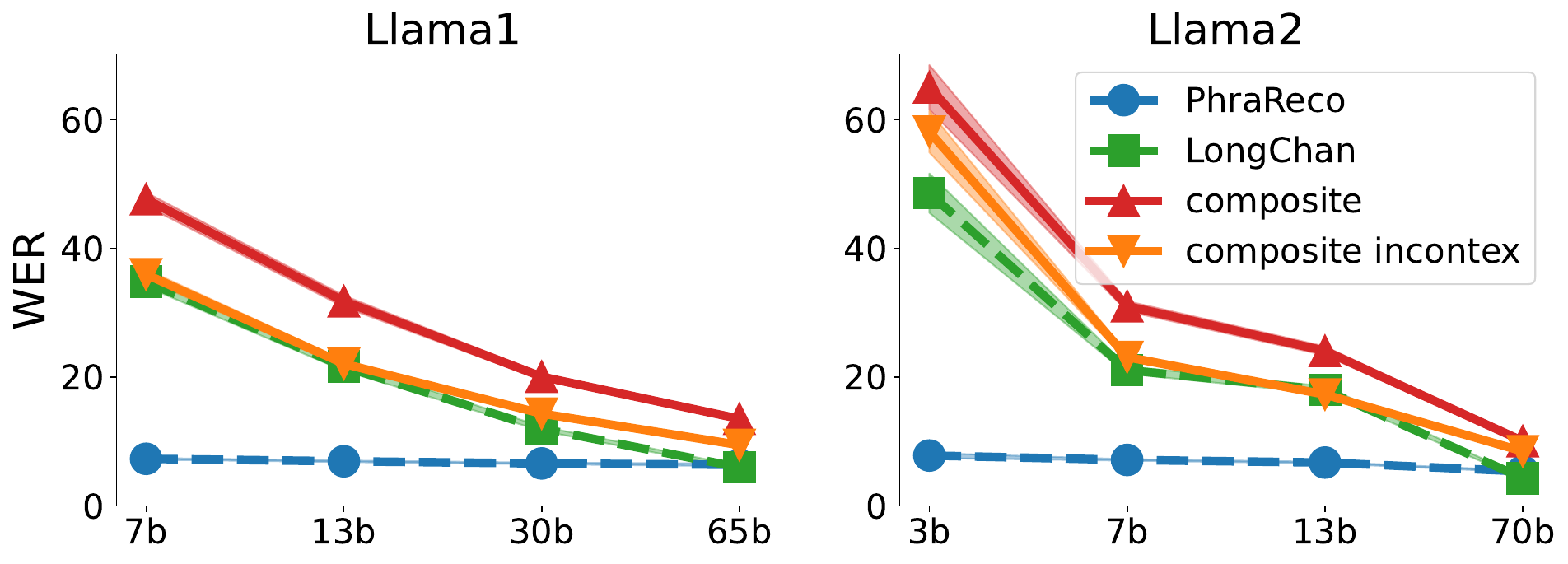}
\includegraphics[width=\linewidth]{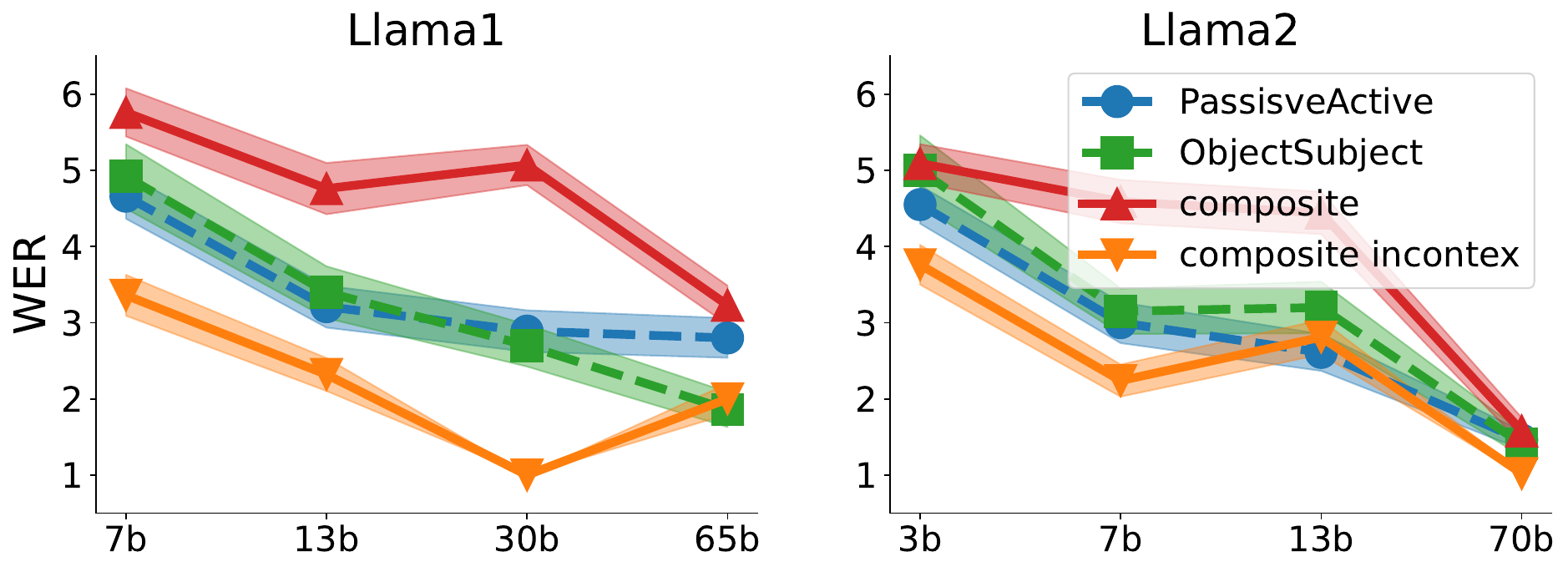}
\end{center}
\caption{ The word error rate (WER) vs the model scale on composite linguistic translation tasks. Dashed lines: simple tasks. Solid lines: composite tasks. Rows: \textbf{(T1)} Phrase Recombination with Longer Chain; \textbf{(T2)} Passive to Active and Object to Subject Transformation. Columns: different models. Lines: performance in different evaluation settings, e.g., the two simple tasks, the composite setting, and the composite in-context setting (examples are shown in \Cref{app:subsec:Translation Tasks}).
}
\label{fig:pr_lc}
\end{wrapfigure}

We also observed the LLM exhibits better compositional ability on linguistic tasks than on logical tasks. We conclude natural language inputs can indeed help language models understand concepts better than special symbols or code. Natural language provides a richer context, which aligns better with how these models are trained on large text corpora.  In contrast, logical and numerical tasks often rely on more rigid structures, which makes it harder for models to generalize without explicit training on such patterns.

\textbf{Discussion.}  
We observe the capability of models to handle composite tasks is significantly influenced by the task characteristics.
If composite tasks contain simple tasks related to different parts or perspectives of the input, the model will tackle the composite tasks well.

One natural explanation is that the model processes the input in some hidden embedding space and decomposes the embedding of the input into different ``regions''. Here, each region is dedicated to specific types of information and thus related to different tasks, such as word-level modifications, arithmetic calculations, mapping mechanisms, semantic categorization, linguistic acceptability, or sentiment analysis.
Then, suppose the two simple tasks correspond to two different task types that relate to separate regions of the embedding. In that case, the model can effectively manage the composite task by addressing each simple task operation within its corresponding region. As the model scales, its ability to handle individual tasks improves, leading to enhanced performance on composite tasks in such scenarios. 
For separable composite tasks, the inputs are divided into distinct regions and also reflected in embeddings, resulting in the model's high performance.
However, when the simple tasks are not separable (e.g., requiring sequential steps in reasoning), their information mixes together, complicating the model's ability to discern and process them distinctly. Such overlap often leads to the model's inability to solve the composite task.
This intuition is formalized in the following sections in a stylized theoretical setting. 
\section{Theoretical Analysis}
\subsection{Problem Setup}
Despite the complex nature of non-linearity in transformers in LLMs, we note it is useful to appeal to the simple case of linear models to see if there are parallel insights that can help us better understand the phenomenon. 
In this section, we analyze a linear attention module and aim to provide rigorous proof about why LLMs can achieve compositional ability in some simple cases that could shed light on the more intricate behaviors observed in LLMs.

\textbf{In-context learning.}
We follow existing work~\citep{rek2023what, garg2022can, mahankali2023one} with slight generalization to $K$ simple tasks. A labeled example is denoted as $(x, y)$ where $ x\in \R^{d}, y \in \R^K$. In a simple task $k \in [K]$, $y$ has only one non-zero entry $y^{(k)}$. In a composite task, $y$ can have non-zero entries in dimensions corresponding to the combined simple tasks.
The model takes a prompt $\left(x_{1}, y_{1}, \ldots, x_{N}, y_{N}, x_{q}\right)$ as input, which contains $N$ in-context examples $(x_i, y_i)$'s and a query $x_q$, and aims to predict $\hat{y}_q$ close to the true label $y_q$ for $x_q$. 
The prompt is usually stacked into an embedding matrix:  
$
    E
        :=   \begin{pmatrix}
        x_{1} & x_{2} & \dots & x_{N} & x_{q} \\
        y_{1} & y_{2} & \dots & y_{N} & 0 \\
        \end{pmatrix} 
        \in \R^{d_e\times (N+1)}
$
where $d_e = d + K$. 
In in-context learning, we first pretrain the model using training prompts and then evaluate the model with evaluation prompts; see details below.

\textbf{Pretraining procedure.} We have $B$ training data indexed by $\tau$, each containing an input prompt $\left(x_{\tau, 1}, y_{\tau,1}, \ldots, x_{\tau,N}, y_{\tau,N}, x_{\tau, q}\right)$ and a corresponding true label $y_{\tau,q}$.  
Consider the following empirical loss: 
$
\widehat{L}(\theta) = \sum_{k = 1}^K \widehat{L}_k(\theta) =\frac{1}{2 B} \sum_{\tau=1}^{B}\|\widehat{y}_{\tau, q}- y_{\tau, q}\|^{2}
$
and the population loss (i.e., ${B \rightarrow \infty}$):
$
L(\theta) = \frac{1}{2} \mathbb{E}_{x_{\tau, 1}, y_{\tau, 1}, \cdots, x_{\tau, N}, y_{\tau, N}, x_{\tau, q}}\left[\left(\widehat{y}_{\tau, q}-y_{\tau, q} \right)^{2}\right]
$.

\textbf{Evaluation procedure.}
We now detail how to evaluate the model on downstream \emph{composite} tasks.
We consider the downstream classification task to be a multi-class classification problem, where the output label is a $K$-dimensional vector, and each entry corresponds to a simple binary classification task.
For any given simple task $k$, the binary classification label is given by $\text{sgn} ( y^{(k)}_{ q})$, where $\text{sgn}$ is the sign function. Similarly, our prediction is $\wt y^{(k)}_{ q} = \text{sgn} \left(\widehat y^{(k)}_{q} \right)$. The accuracy of a composite task is defined as 
$
\Acc_{\theta}(x_{1}, \ldots, y_{N}, x_{q}) = \frac{1}{K}\sum_{k = 1}^K \mathbbm{1}\left( \text{sgn} \left(\widehat y^{(k)}_{q} \right) = \text{sgn} ( y^{(k)}_{ q})\right).
$
We denote it as $\Acc_{\theta}(\{x_i,y_i\}_{i=1}^N)$. Here we denote the model performance on each task as separate dimension, (e.g.letter capitalization, numbers increment), and the performance of composite tasks as the aggregation of multiple dimensions.

\textbf{Data.} Assume $x \iid \mathcal{N}(0, \Lambda)$, where $\Lambda \in \sR^{d \times d}$ is the covariance matrix. Assume $y = W  x$, where $W \in \R^{K \times d}$. For any simple task $k\in [K]$, its label is the $k$-th entry of $y$, which is $y^{(k)} = \langle{w^{(k)}, x}\rangle$, where $w^{(k)}$ is the $k$-th row of $W$. We assume each task weight $w^{(k)}  \iid \mathcal{N}\left(0, I_{d}\right)$.

\textbf{Linear self-attention networks.}
These networks are widely studied~\citep{pmlr-v202-von-oswald23a, rek2023what, garg2022can, zhang2023trained, shi2023why}. Following them, we consider the following linear self-attention network with parameters $\theta = (W^{PV}, W^{KQ})$: 
$
    f_{\textup{LSA},\theta}(E) =  E + W^{PV} E \cdot \frac{E^\top W^{KQ} E}{N}.
$
The prediction of the model for $x_q$ is $\hat{y}_q = [f_{\text{LSA}, \theta}(E)]_{(d+1):(d+K),N+1}$, the bottom rightmost sub-vector of $f_{\text{LSA}, \theta}(E)$ with length $K$. 

\textbf{{Compositional ability}.}
We now provide a formal definition about \textit{compositional ability} of an LLM on composite tasks. 

\begin{defn}[Compositional Ability]\label{defn:Compositional}
Consider a composite task $\T$ that combines two simple tasks $k$ and $g$.
Let $\Set_{k}$ denote $N$ labeled examples from task $k$, and similarly for $\Set_{g}$.
Given an $x_q$ from composite task $\T$, we say that the model has \textit{compositional ability} on $\T$ if the model has higher accuracy using in-context examples from   $\Set_{k} \cup \Set_{g}$ than from either single one, i.e. $\max\{\Acc_\theta(\Set_{k}), \Acc_\theta(\Set_{g})\} \le \Acc_\theta(\Set_{k} \cup \Set_{g})$.
\end{defn}

\subsection{Theoretical Results}  \label{subsec:theoretical}

In this section, we present our theoretical results. We explain the observation in empirical results through the lens of confined supports in input embeddings corresponding to separate subspaces (modeling separable composition). We provide theoretical justification showing that separable composite task composite tasks whose inputs are composed by components adhere to certain conditions where models exhibit satisfactory performance. Models will fail when such conditions are violated. We first introduce the basic setup and definitions.

\textbf{Disjoint subspaces of simple tasks.}  
Recall that $x$ lies in a $d$-dimensional space where each dimension represents a different characteristic. A simple task may depend only on a subset of these dimensions since its label only depends on a few features.
Let $\sS = [d]$ represent the dimensions of $x$. For a task $k$,  the output $y^{(k)} = \inner{w_k, x}$ depends on a subset of dimensions in $x$. Denote this subset by $\sK \subseteq \sS$ and call it the active index set for task $k$. 

In the following, we always assume that the $K$ tasks have disjoint subspaces: for any two tasks $k \neq g$, their active index sets $\sK$, and $\sG$ are disjoint, i.e., $\sK \cap \sG = \varnothing$. 
In practice, the dimensions within $\sK$ could be associated with numerical arithmetic operations, while those in $\sG$ might pertain to semantic analysis. This illustrates the model's approach to address these tasks in their respective subspaces.

We now introduce a mild assumption regarding the distribution of input embeddings.
\begin{assumption}\label{assum:lambda}
Given two disjoint subspaces $\sK$ and $\sG$, the covariance matrix $\Lambda$ of the input distribution can be segmented into block matrices $\Lambda_{\sK\sK}, \Lambda_{\sK\sG}, \Lambda_{\sG\sK},$ and $\Lambda_{\sG\sG}$, then we assume $\sigma_{max}(\Lambda_{\sK\sG}) = \sigma_{max}(\Lambda_{\sG\sK}) \le \epsilon$ for constant $\epsilon$, where $\sigma(\cdot)$ denote the singular value of matrix.
\end{assumption}

\Cref{assum:lambda} implies that for two separate simple tasks, each associated with its respective feature subspace $\sK$ and $\sG$, the covariance between these two sets of features is 
bounded by a constant value. 
This is a natural assumption when inputs of composite tasks can be decomposed into parts. Suppose we have input embeddings from two tasks: arithmetic computations and semantic analysis. This assumption suggests that the feature subspaces of the input embeddings for two tasks are almost independent.

We now define confined support, which means that each task's input embedding only has support within its feature subspace.
\begin{defn}[Confined Support]
We say a task has confined support if the input $x$ only has larger singular values within its active index set. The norm of entries outside the active index set is bounded by a small constant $\delta$.
\end{defn}

This definition shows that each simple task only has large values within its corresponding subsets of dimensions of input embeddings. For example, let $\sK$ represent the first $d_1$ dimensions of an input vector $x$, and $\sG$ account for the remaining $d_2$ dimensions, with the total dimension being $d = d_1 + d_2$.
The examples from task $k$ will have input as $x = (x_1,x_{\delta_1})$ where $x_1 \in \sR^{d_1}, x_{\delta_1} \in \sR^{d_2}, \|x_{\delta_1}\| \le \delta$. Similarly, the examples from task $g$ will have inputs as $x = (x_{\delta_2}, x_2)$.

We now present our results of the compositional ability under a confined support of $x$.

\begin{restatable}{theorem}{propAcc}
\label{propAcc}
    Consider distinct tasks $k$ and $g$ with corresponding examples $\Set_k, \Set_g$. 
    If two tasks have confined support, and \Cref{assum:lambda} is true, then with high probability, the model has the compositional ability as defined in \Cref{defn:Compositional}. Moreover, \[
    \Acc_{\theta}(\Set_k) +  \Acc_{\theta}(\Set_g) \le \Acc_{\theta}(\Set_{k \cup g}).
    \]
\end{restatable}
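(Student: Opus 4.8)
The plan is to compute the model's prediction $\widehat{y}_q$ in closed form (or closed enough) for a prompt whose in-context examples come from a single task, and then compare it against the prompt built from $\Set_k \cup \Set_g$. First I would recall the known characterization of the pretrained linear self-attention solution: for the quadratic population loss over the Gaussian in-context data, the minimizer $\theta = (W^{PV}, W^{KQ})$ has the block structure derived in the prior ICL-linear-attention literature (von Oswald et al., Mahankali et al., Zhang et al.), so that $\widehat{y}_q = W^{KQ}_{\text{eff}} \left(\frac{1}{N}\sum_i x_i y_i^\top\right) x_q$ up to the covariance-dependent rescaling; i.e., the prediction is, to leading order, a linear readout of the empirical cross-covariance $\frac{1}{N}\sum_i x_i y_i^\top$ applied to $x_q$. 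This reduces the whole statement to controlling how this empirical moment behaves when the examples are confined to a subspace.

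Next I would exploit \emph{confined support} plus \Cref{assum:lambda}. For examples in $\Set_k$, the inputs look like $x_i = (x_{i,1}, x_{i,\delta})$ with $\|x_{i,\delta}\| \le \delta$ and $y_i$ supported on the task-$k$ coordinates; hence $\frac{1}{N}\sum_i x_i y_i^\top$ is, up to $O(\delta)$ and $O(\epsilon)$ error, block-supported on $\sK \times \sK$. When the query $x_q$ comes from the composite task $\T$, it has large components in \emph{both} $\sK$ and $\sG$; feeding $\Set_k$ alone, the readout recovers the sign of $y_q^{(j)}$ correctly for $j \in \sK$ (this is exactly the single-task ICL guarantee, which holds with high probability over the draw of task weights and examples via standard concentration of the empirical covariance around $\Lambda_{\sK\sK}$), but for $j \in \sG$ the predicted coordinate is pure noise of size $O(\delta + \epsilon)$, so $\text{sgn}(\widehat y_q^{(j)})$ matches $\text{sgn}(y_q^{(j)})$ only by chance — heuristically with probability about $1/2$. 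The key arithmetic point: $\Acc_\theta(\Set_k) \approx \frac{1}{K}(|\sK| \cdot 1 + |\sG^c\setminus\sK|\cdot\tfrac12)$ and likewise $\Acc_\theta(\Set_g)$, whereas with $\Set_k \cup \Set_g$ the cross-covariance is block-supported on both $\sK\times\sK$ and $\sG\times\sG$ (the cross terms still $O(\delta+\epsilon)$ by disjointness and Assumption~\ref{assum:lambda}), so both blocks of coordinates are predicted correctly and $\Acc_\theta(\Set_{k\cup g}) \approx \frac{1}{K}(|\sK|+|\sG|)$ on those coordinates. Comparing the fractions and using that the "random" contributions that inflate $\Acc_\theta(\Set_k)$ and $\Acc_\theta(\Set_g)$ are each bounded by the fraction of coordinates \emph{not} handled, one gets $\Acc_\theta(\Set_k)+\Acc_\theta(\Set_g) \le \Acc_\theta(\Set_{k\cup g})$; the "high probability" qualifier absorbs the concentration events and the $O(\delta+\epsilon)$ perturbations not flipping any sign that is bounded away from zero.

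I would organize the write-up as: (i) a lemma restating the pretrained-parameter / prediction formula specialized to $K$ tasks; (ii) a lemma that, under confined support and \Cref{assum:lambda}, the prediction on coordinates inside the "covered" index set is sign-correct w.h.p., and on the uncovered coordinates is $O(\delta+\epsilon)$ in magnitude; (iii) the counting argument above to assemble the accuracy inequality. The main obstacle I expect is step (ii) done honestly: I need a quantitative lower bound on $|y_q^{(j)}|$ for covered coordinates (so that the $O(\delta+\epsilon)$ perturbation cannot flip the sign) together with matching concentration for the empirical moments at sample size $N$ — this is where the "with high probability" really has to be earned, and where the constants $\epsilon$ and $\delta$ must be taken small relative to the typical magnitude of a Gaussian inner product $\langle w^{(j)}, x_q\rangle$. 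A secondary subtlety is justifying the "$\approx 1/2$" on uncovered coordinates rigorously; the cleanest fix is to avoid claiming it and instead only use the trivial bound $\Acc \le \frac{1}{K}(\#\text{covered} + \#\text{uncovered})$ for the single-task terms while getting $\#\text{covered}$ exactly for the union — which already suffices for the stated inequality since $|\sK|+|\sG| \le K$ forces the two single-task accuracies (each at most the covered fraction plus the complementary slack) to sum below the union accuracy. I would double-check whether the theorem needs $|\sK|+|\sG|=K$ or merely $\le K$; the displayed inequality is consistent with the weaker hypothesis, so I would state it that way.
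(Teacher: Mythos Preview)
Your setup---invoking the closed-form pretrained solution and then exploiting the block structure forced by confined support and \Cref{assum:lambda}---matches the paper's route. But the core comparison step diverges, and your fallback ``cleanest fix'' does not close the argument. The trivial bound you propose, $\Acc_\theta(\Set_k) \le \frac{1}{K}(\#\text{covered}+\#\text{uncovered})$, is just $\Acc_\theta(\Set_k)\le 1$; summing the two single-task terms gives $\le 2$, which says nothing about an upper bound by $\Acc_\theta(\Set_{k\cup g})\le 1$. You cannot escape controlling the uncovered coordinates: the inequality genuinely needs the uncovered-coordinate indicator to vanish (or be dominated), not merely to be bounded by $1$.

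The paper avoids your difficulty by a different mechanism. It does \emph{not} try to prove that the covered coordinate is sign-\emph{correct} with high probability (your step~(ii)); instead it observes that the covered-coordinate prediction is sign-\emph{preserved} across prompts. Concretely, with $K=2$ and the block-diagonal $\Gamma^{-1}\approx\diag(\Gamma_1^{-1},\Gamma_2^{-1})$, feeding $\Set_k$ alone yields $\widehat y_q \approx (\widehat y_q^{(1)},0)$, and feeding $\Set_k\cup\Set_g$ (with $2N$ examples) yields $\widehat y_q \approx (\tfrac12\widehat y_q^{(1)},\tfrac12\widehat y_q^{(2)})$: the first coordinate is the \emph{same expression rescaled by $1/2$}, so $\text{sgn}$ is identical whether or not it matches the truth. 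Hence $\mathbbm{1}(\tilde y_q^{(1)}=y_q^{(1)})$ is literally the same indicator in $\Acc_\theta(\Set_k)$ and in $\Acc_\theta(\Set_{k\cup g})$. Meanwhile the uncovered coordinate under $\Set_k$ is $O(\delta\epsilon)$, so its indicator is $0$ (the ``with high probability'' absorbs the event that this small residual flips to match the nonzero Gaussian $y_q^{(2)}$). This gives $\Acc_\theta(\Set_k)=\tfrac12\mathbbm{1}(\tilde y_q^{(1)}=y_q^{(1)})$, the symmetric statement for $\Set_g$, and their sum is exactly $\Acc_\theta(\Set_{k\cup g})$. Reworking your step~(ii) to ``same sign as in the union prompt'' rather than ``correct sign'' removes the need for any concentration-to-truth argument and makes the counting trivial. (Minor point: in your write-up $\sK,\sG\subseteq[d]$ index \emph{input} coordinates, not output labels; the sums like ``$|\sK|+|\sG|\le K$'' conflate the two.)
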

\Cref{propAcc} shows the compositional ability of LLMs to handle composite tasks that integrate two simple tasks, which have confined support in their own feature subspace. 

An illustrative case involves the tasks of Capitalization \textbf{(A)} \& Plus One  \textbf{(F)} and Past Tense \textbf{(D)} \& Plus One  \textbf{(F)}, as depicted in \Cref{tab:logical_results_main}. These two simple tasks involve word-level modification and arithmetic operation on separate parts of the input. Due to this separation, each task correlates with a specific segment of the input embedding. Therefore, it is observed that these tasks possess confined supports.

We further provide theory illustrating the \textbf{necessity of the confined supports}, we demonstrate that when the confined support is violated, simple tasks begin to show variations (indicated by large singular values) across the entire feature subspace of the input embedding. For instance, the composite task of Capitalization (A) \& Swap (B), which involves mixed steps in reasoning as shown in \Cref{fig:capital_swap}, shows poor performance of LLMs given both simple tasks' examples as in-context demonstrations.
Another example is Modular (G) \& Two Sum Plus (H) as shown in the last row of \Cref{tab:logical_results_main}, where both simple tasks involve multisteps arithmetic operation.
These two tasks share the same embedding space support, mixing their variations and causing the model to be unable to effectively address the composite tasks that integrate them.
We further substantiate this observation with \Cref{propOverlap}, which establishes that when two tasks share overlapping support in the embedding space, a scenario can arise where the model fails to demonstrate compositional ability.

\begin{restatable}{corollary}{propOverlap}
\label{propOverlap}
If two tasks do not have confined support, there exists one setting in which we have
\[
\Acc_{\theta}(\Set_k) = \Acc_{\theta}(\Set_g) = \Acc_{\theta}(\Set_{k \cup g}).
\]
\end{restatable}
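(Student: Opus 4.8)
The plan is to exhibit a concrete, degenerate instance of the setup in which the two simple tasks share exactly the same active index set (so confined support fails in the strongest possible way), and then show that the linear self-attention model's predictions are literally unaffected by whether we feed in $\Set_k$, $\Set_g$, or $\Set_{k\cup g}$, forcing all three accuracies to coincide. Concretely, I would take $\sK = \sG = [d]$ (or any common subspace), and choose the task weights $w_k$ and $w_g$ so that the induced prediction maps are identical on the relevant query inputs — for instance, by letting the two tasks differ only in the output coordinate they populate (dimension $d+k$ versus $d+g$ of the embedding) while acting on the input through the same linear functional, or even taking $w_k = w_g$. Since \Cref{defn:Compositional} only asks for the inequality $\max\{\Acc_\theta(\Set_k),\Acc_\theta(\Set_g)\}\le \Acc_\theta(\Set_{k\cup g})$, the corollary is really the claim that this inequality can be tight and simultaneously an equality on both sides; the natural way to get that is to make the three prompts produce the same $\widehat y_q$.

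The key steps, in order, would be: (i) recall the closed form of $f_{\textup{LSA},\theta}(E)$ and write out $\widehat y_q = W^{PV} E \, \frac{E^\top W^{KQ} E}{N}\big|_{(d+1):(d+K),\,N+1}$ explicitly in terms of the in-context examples $\{x_i,y_i\}$ and the query $x_q$; this exposes $\widehat y_q$ as a bilinear expression in the empirical moments $\frac1N\sum_i x_i x_i^\top$ and $\frac1N\sum_i y_i x_i^\top$. (ii) Specialize the data model: with $\sK=\sG$ the examples from task $k$ and task $g$ have statistically identical input distributions ($x\sim\mathcal N(0,\Lambda)$ restricted to the same coordinates), so the only difference between $\Set_k$ and $\Set_g$ is which output coordinate of $y_i$ is nonzero. (iii) Choose $\theta$ (or restrict to the population/pretrained $\theta$ if one wants to stay faithful to the pretraining story) so that the $(d+k)$-th and $(d+g)$-th output rows of the relevant block of $W^{PV}$ act symmetrically; then argue that $\text{sgn}(\widehat y_q^{(k)})$ computed from $\Set_k$ equals $\text{sgn}(\widehat y_q^{(k)})$ computed from $\Set_{k\cup g}$ equals the analogous quantity for $g$, coordinate by coordinate. (iv) Conclude that the per-coordinate indicator in the definition of $\Acc_\theta$ is the same in all three cases, hence the averaged accuracies are equal. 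Because this only needs to hold "in one setting," I have full freedom to pick the cleanest $\Lambda$, $w_k$, $w_g$, $N$, and $\theta$ — e.g. $\Lambda = I_d$, $w_k = w_g$, and $W^{KQ}, W^{PV}$ in the symmetric form used in the proof of \Cref{propAcc}.

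The main obstacle I anticipate is making the construction genuinely consistent with everything the theorem environment inherits: the accuracy is defined as an expectation/average over the randomness of prompts and of $w^{(k)}\iid\mathcal N(0,I_d)$, so "the three accuracies are equal" must hold after that averaging, not just for one frozen draw. If the paper's $\theta$ is the pretrained (population-loss minimizer) parameter, I also have to check that with $\sK=\sG$ the pretraining does not do something asymmetric between the two coordinates — but since the tasks are i.i.d. and exchangeable, symmetry of the minimizer in those two coordinates should follow, and I would state that as the crux lemma. The fallback, if coupling the randomness is awkward, is to make the equality hold deterministically: fix a specific non-random $w_k=w_g$ and a specific query $x_q$ (the corollary says "there exists one setting," which I would read as permitting a fully specified instance), so that $\widehat y_q$ is literally the same vector in all three cases and the indicators match pointwise. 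Either route reduces to the same computation — showing the bilinear form defining $\widehat y_q$ is invariant under replacing $\Set_k$ by $\Set_{k\cup g}$ — which is the one calculation I would actually carry out in detail.
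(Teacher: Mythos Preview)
Your proposal is correct and follows essentially the same route as the paper: construct a setting where the two tasks' inputs overlap completely (the paper takes both tasks to have inputs $x=(a,b)$ spanning the full $d$-dimensional space with the same label map $y=(w_a^\top a,\,w_b^\top b)$, so the two ``tasks'' are statistically indistinguishable), then observe that the closed-form $\widehat y_q$ computed from $\Set_k$, $\Set_g$, or $\Set_{k\cup g}$ differs only by a positive scalar ($1/N$ versus $1/(2N)$), hence the sign-based accuracy coincides in all three cases. Your worry about averaging over the randomness of $w^{(k)}$ and the samples is in fact more careful than the paper's own argument, which simply asserts the two expressions ``share the same direction'' without addressing sample-level coupling; your deterministic fallback (fix $w$, fix the examples, read the corollary's ``there exists one setting'' literally) is exactly the clean way to close that gap, and nothing more elaborate is needed.
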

\Cref{propOverlap} demonstrates that a model's failure to solve tasks with mixed steps reasoning, which contains overlapping input embedding spaces, thereby diminishing the model's ability to solve them when presented together.

We also show the \textbf{scaling effect}: if simple tasks have confined support, the compositional ability of language models will increase as the model scale increases in \Cref{thm:modelscaleAcc} in \Cref{app:subsec:Compositional_scale}. We demonstrate this by showing that the model's accuracy on each simple task improves with a larger model scale. We finally provide a \textbf{case study} on confined support for illustration in \Cref{app:subsec:case_study}. We defer the full proof in \Cref{app:sec:proof}.
\section{Conclusion}
In this work, we presented a distinct pattern in LLMs' behaviors when tackling composite tasks. We observed that if the composite task can be separated into two simple tasks whose inputs are from distinct perspectives, the models exhibit decent compositional ability. Otherwise, LLMs will struggle, and scaling up the model size may not offer improvement.  We illustrated this behavior across a variety of logical and linguistic challenges. We extended our discussion to the role of input embeddings in affecting model performance, providing a theoretical backup that connects the nature of tasks to how inputs are processed. We anticipate that our research will shed light on the compositional capabilities and reasoning of LLMs, and stimulate further exploration in this direction.

\section*{Impact Statements}
Our work aims to improve the understanding of in-context learning capabilities of Large Language Models (LLMs) in the handling of composite tasks. Our paper is mostly theoretical in nature, and thus, we foresee no immediate negative ethical impact. We illustrate the empirical behavior of LLMs on complex reasoning tasks and provide a theoretical explanation for it. In the long term, we hope our work may inspire effective algorithm design and better understanding and employment of LLMs.

\section*{Acknowledgements}
The work is partially supported by Air Force Grant FA9550-18-1-0166, the National Science Foundation (NSF) Grants 2008559-IIS, CCF-2046710, and 2023239-DMS.

\bibliography{ref}
\bibliographystyle{colm2024_conference}

\newpage
\appendix
\onecolumn

\begin{center}
	\textbf{\LARGE Appendix }
\end{center}
{\hypersetup{linkcolor=black}
\tableofcontents
\newpage
}
In this appendix, we provide more related work in \Cref{app:related}. We provide more empirical settings and results for logical tasks in \Cref{app:sec:log} and linguistic translation tasks in \Cref{app:subsec:Translation Tasks}. We provide a theory for confined support and model scalability, along with a case study of a toy model in \Cref{app:sec:theory}. We provide full proof in \Cref{app:sec:proof}.

\section{Related Work} \label{app:related}
\textbf{Large language model.}
LLMs are often Transformer-based \citep{vaswani2017attention} equipped with the enormous size of parameters and pretrained on vast training data. Typical LLMs includes BERT ~\cite{devlin2018bert}, PaLM~\cite{chowdhery2022palm}, LLaMA\cite{touvron2023llama}, ChatGPT~\citep{chatgpt}, GPT4~\citep{openai2023gpt4}. Pretraining methods include masked language modeling~\citep{devlin2018bert, liu2019roberta}, contrastive learning~\citep{gao-etal-2021-simcse, shi2023the,ssll23,ssl24} and auto-regressive pretraining~\citep{radford2018improving, radford2019language}. Several works \citep{madasu-srivastava-2022-large,alajrami2023understanding} investigate the effects of pretraining on language models. Adapting LLMs to various downstream tasks has received significant attention, e.g., adaptor~\citep{hu2021lora,hu-etal-2023-llm,zhang2023llama,luo2024decoupled},
prompt tuning~\citep{lester2021power, li2021prefix,wei2023symbol,gls+24c}, multitask finetuning~\citep{sanh2021multitask,wang2023multitask,xu2023improving,xu2024improving}, instruction tuning~\cite{chung2022scaling,mishra2022cross}, in-context learning~\citep{min2022rethinking,dong2022survey,yao2023tree}, low-rank adaptation~\cite{hu2021lora,zeng2024the,hu2024computationalLo}, reinforcement learning from human feedback (RLHF)~\citep{ouyang2022training} and inference acceleration~\citep{gll+24c,gls+24a,xu2024adainf}.

\textbf{In-context learning.}
LLM exhibits a remarkable ability for in-context learning (ICL)~\citep{brown2020language}, particularly for generative models. Given a sequence of labeled examples and a testing example (combined as a prompt), the model can construct new predictors for testing examples without further parameter updates. Several empirical studies investigate the behavior of ICLs. \citet{zhao2021calibrate, holtzman2021surface,lu-etal-2022-fantastically} formulate the problems and report the sensitivity. \citet{rubin2022learning,liu-etal-2022-makes, hongjin2022selective,wang2023large} provide methods to better choose in-context learning examples. \citet{chen-etal-2022-meta, min-etal-2022-metaicl} use meta training with an explicit in-context learning object to boost performance. Theoretically, \citet{xie2022an,garg2022can} provide a framework to explain the working mechanism of in-context learning. ~\citet{pmlr-v202-von-oswald23a, rek2023what, mahankali2023one, zhang2023trained}, investigating with linear models, show how transformers can represent gradient descent and conduct linear regression. Based on these works, we provide an analysis showing how LLM can exhibit compositional ability in ICL.

\textbf{Emergence of compositional ability.}
Scaling law was first proposed by \citet{kaplan2020scaling} and then followed up by \citet{hoffmann2022training}, emphasizing both the scale of models and training data. Sometimes, increasing scale can lead to new behaviors of LLMs, termed \textit{emergent abilities}~\citep{wei2022emergent,arora2023theory}, such as domain generalization~\cite{smf+24}, math reasoning~\cite{gll+24a}, spatial reasoning~\cite{wms+24} and so on. Recent works show LLMs with larger scales have distinct behavior compared to smaller language models~\citep{wei2023larger, shi2023why,shi2024why}. These behaviors can have positive or negative effects on performance. 
Solving complex tasks and reasoning is an active problem in the AI community~\cite {huang2022towards}. 
There is a line of empirical works investigating the compositional ability in linguistic fashion~\citep{kim-linzen-2020-cogs,levy2022diverse,an2023does,an2023context,xu2024do}. LLMs are capable of learning abstract reasoning (e.g., grammar) to perform new tasks when finetuned or given suitable in-context examples. In our work, we include linguistic experiments as part of our testing suite, illustrating LLMs' compositional ability. \citet{ye2023compositional, berglund2023reversal, dziri2023faith} show LLMs will have difficulties solving tasks that require reasoning. \citet{berglund2023reversal} studies that LLMs trained on ``A is B'' fail to learn ``B is A''. In our work, we conduct similar experiments showing LLMs will fail on composite if different steps of logical rules are mixed.

\section{Logical Tasks}\label{app:sec:log}
\subsection{Task Setup}
We provide a comprehensive explanation of logical composite tasks below. Examples can be seen in \Cref{tab:logical_example_compose_full}. 
\begin{itemize}
    \item \textbf{(A) + (B) Capitalization \& Swap}, as in \Cref{sec:warm-up}.
    \item \textbf{(A) + (C) Capitalization \& Two Sum.} Given words of numerical numbers, \textbf{*} represents the operation of capitalizing, \textbf{@} represents summing the two numbers.
    \item \textbf{(G) + (H) Modular \& Two Sum Plus.} Given numerical numbers, \textbf{@} represents the operation of taking modular, \textbf{\#} represents to sum the two numbers and then plus one. 
    \item \textbf{(A) + (F) Capitalization \& Plus One.} If numerical numbers are given, plus one; if words are given, capitalize the word; if both are given, perform both operations.
\end{itemize}
Among these, \textbf{(A) + (F)} performs the two operations on separable parts of the test inputs (i.e., separable composite task). 

\begin{table}[ht]
\begin{center}
\resizebox{0.8\linewidth}{!}
{
\begin{tabular}{clll}
\toprule
\textbf{ Tasks} & \textbf{ Simple Task} & \textbf{Simple Task}   & \textbf{ Composite}\\
\midrule[0.5pt]
\textbf{(A) + (B)} & 
\begin{minipage}[t]{2.1cm}
\footnotesize	{input: * apple \\
output: APPLE}
\end{minipage}
&
\begin{minipage}[t]{2.6cm}
\footnotesize	{input: ( farm frog ) \\
output: frog farm}
\end{minipage}
&
\begin{minipage}[t]{2.9cm}
\footnotesize	{input: ( * bell * ford ) \\
output: FORD BELL}
\end{minipage} \\ 
\midrule
\textbf{(A) + (C)} & 
\begin{minipage}[t]{2.0cm}
\footnotesize	{input: \textit{* ( five )}  \\
output: FIVE}
\end{minipage}
&
\begin{minipage}[t]{3.0cm}
\footnotesize	{input: \textit{twenty @ eleven} \\
output: thirty-one}
\end{minipage}
&
\begin{minipage}[t]{4.3cm}
\footnotesize	{input: \textit{* ( thirty-seven @ sixteen )} \\
output: FIFTY-THREE}
\end{minipage} \\
\midrule
\textbf{(G) + (H)} & 
\begin{minipage}[t]{2.0cm}
\footnotesize	{input: 15 @ 6 \\
output:  3}
\end{minipage}
&
\begin{minipage}[t]{2.2cm}
\footnotesize	{input: 12 \# 5 \\
output:  18}
\end{minipage}
&
\begin{minipage}[t]{2.9cm}
\footnotesize	{input: 8 \# 9 @ 7 \\
Ouput: 4 }
\end{minipage} \\ 
\midrule
\textbf{(A) + (F)} & 
\begin{minipage}[t]{2.0cm}
\footnotesize	{input: 435 \\
output: 436}
\end{minipage}
&
\begin{minipage}[t]{2.2cm}
\footnotesize	{input: cow \\
output: COW}
\end{minipage}
&
\begin{minipage}[t]{2.9cm}
\footnotesize	{input: 684 cat \\
output: 685 CAT}
\end{minipage} \\ 

\bottomrule
\end{tabular}
}
\end{center}
\caption{Examples of the four logical composite tasks. Note that in \textbf{(G) + (H)}, the output of the composite task can be either $4$ or $11$ depending on the order of operations, and we denote both as correct. 
}
\label{tab:logical_example_compose_full}
\end{table}


We design our logical tasks following the idea of math reasoning and logical rules. The details are shown in \Cref{tab:logical_example_compose_full}. Our numerical numbers in \Cref{tab:logical_example_simple} are uniformly randomly chosen from 1 to 1000. The words of numbers in task \textbf{(C)} are uniformly randomly chosen from one to one hundred. The words representing objects in \Cref{tab:logical_example_simple} are uniformly randomly chosen from class names of ImageNet after dividing the phrase (if any) into words.  We randomly chose 100 examples in composite testing data in our experiments and replicated the experiments in each setting three times. We fixed the number of in-context examples as $K=10$ as demonstrations.

\subsection{Experimental Setup} \label{app:subsec:log_exp_setup}
We use exact match accuracy to evaluate the performance between sequence outputs. The calculation of exact match accuracy divided the number of matched words by the length of ground truth.

For Llama models, we use official Llama1 and Llama2 models from Meta \citep{touvron2023llama}, we use open\_llama\_3b\_v2 from open OpenLlama \citep{openlm2023openllama}. For GPT models, we use GPT2-large from OpenAI \citep{radford2019language}, and we use GPT-neo models for GPT models in other scales from 
EleutherAI \citep{gpt-neo}.

We've experimented with prompt demonstrations. Instructional prompts do help ChatGPT and Claude3 (although we haven't quantified the accuracy in large-scale experiments), but they offer limited benefits for current open-source models. On the other hand, we did not have prompt tuning or any other parameter updates during our evaluation. 

In our experiments, we provided the model with instructions. Here are instructions of \Cref{fig:dia}, which were prepended to ICL examples. We refer to our codebase for full instructions and results.

\begin{todobox}
* is a function before words for swapping the position of 2 words, \# is another function after words for capitalizing letters of words.
\end{todobox}

\subsection{Results}\label{app:subsec:log_result}
We show a visualization of some logical task accuracy along the increasing to model scale, complement to \Cref{tab:logical_results_main}.
\begin{figure}[ht]
\begin{center}
\includegraphics[width=0.8\linewidth]{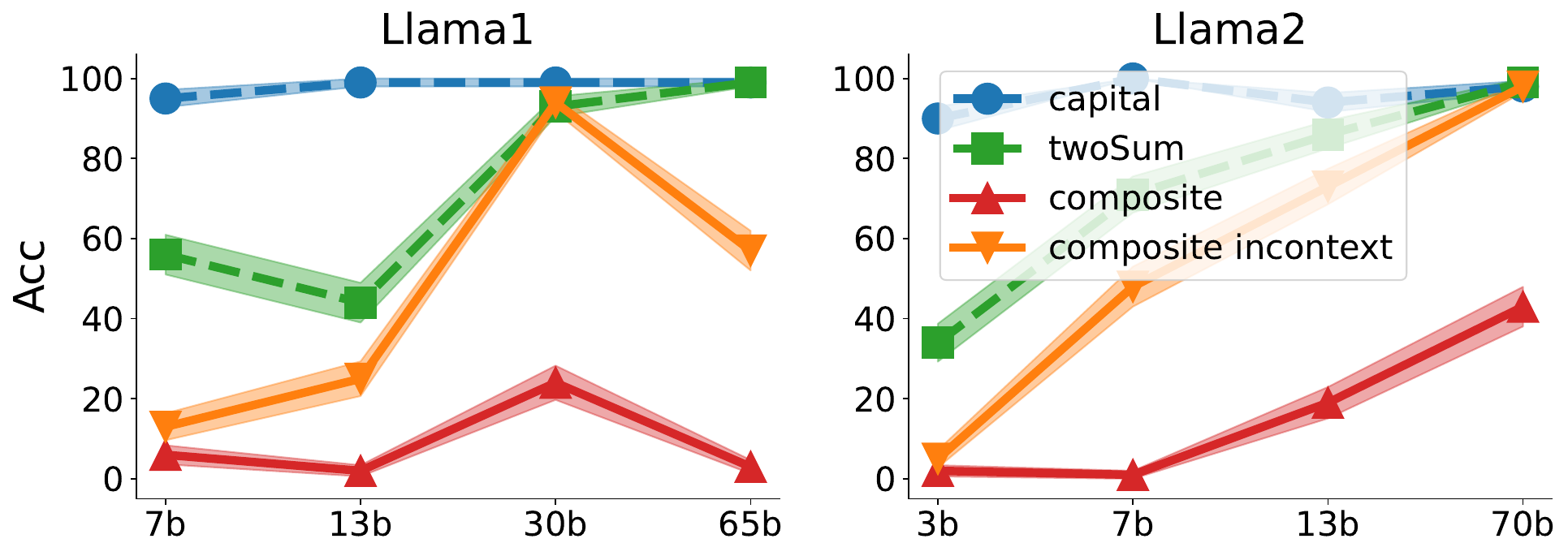}
\includegraphics[width=0.8\linewidth]{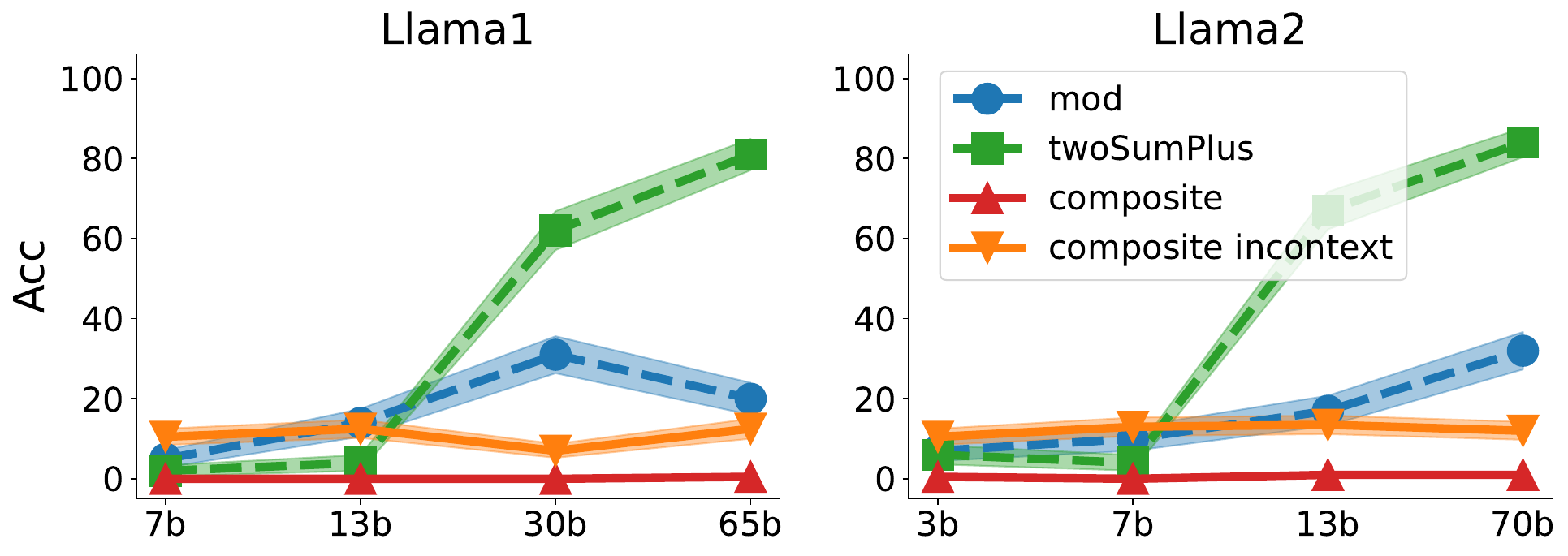}
\includegraphics[width=0.8\linewidth]{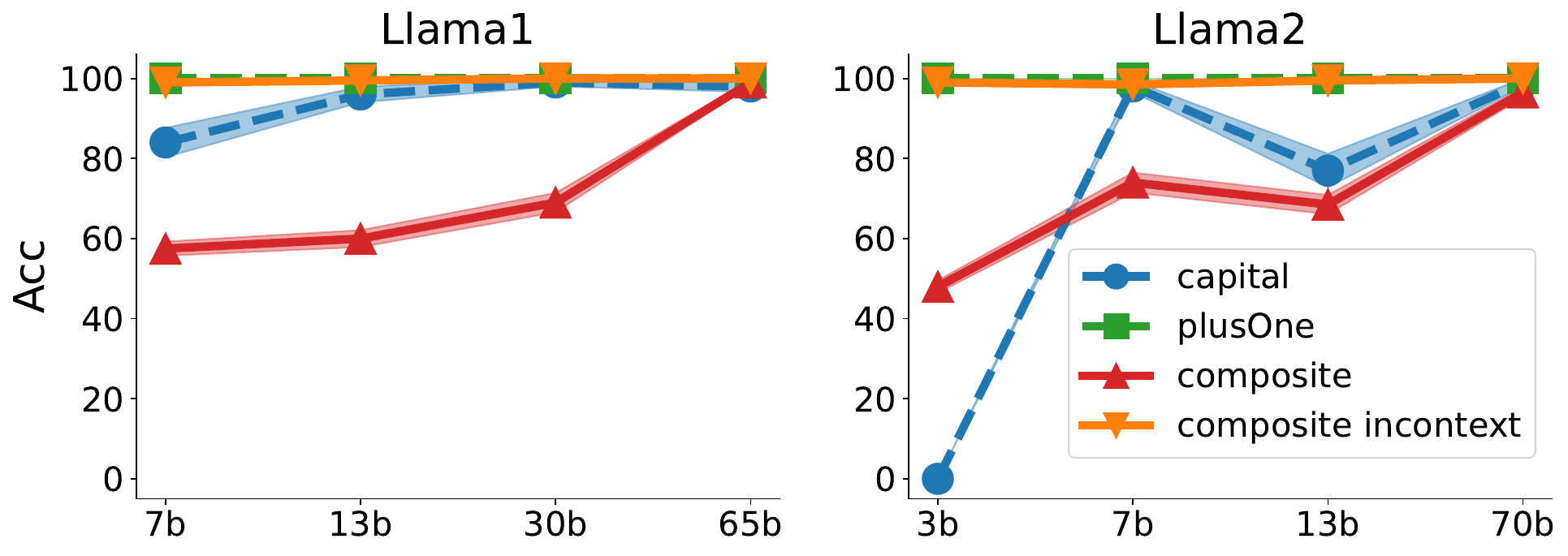}
\end{center}
\caption{The accuracy v.s.\ model scale on composite logical rule tasks. Dashed lines: simple tasks. Solid lines: composite tasks. Rows: \textbf{(A) + (C)} Capitalization \& Two Sum; \textbf{(G) + (H)} Modular \& Two Sum Plus; \textbf{(A) + (F)} Capitalization \& Plus One. Columns: different models. 
Lines: performance in different evaluation settings, i.e., the two simple tasks, the composite setting, and the composite in-context setting (examples for the last two are shown in \Cref{tab:upper_swap_example}). 
}
\label{fig:acc}
\end{figure}

We also include results for the more recent model Llama3~\citep{llama3} on the part of our logical tasks to demonstrate the idea. We show results in \Cref{tab:logical_results_llama3}.
\begin{table}[ht]
\begin{center}
\resizebox{0.5\linewidth}{!}{
\begin{tabular}{lc|cc}
\toprule
 & \multicolumn{1}{c|}{\textbf{}} & \multicolumn{2}{c}{\textbf{Llama3}} \\
 
 & \textbf{Tasks} & \textbf{8B} & \textbf{70B} \\
\midrule[0.5pt]
\textbf{(A) + (B)}   &Capitalization&  100 & 100 \\
    & swap & 100 & 100 \\
 & Compose & 52 & 72 \\
  & Com. in-context & {97} & {100} \\
\cmidrule{1-4}
\textbf{(A) + (F)} &Capitalization& 100 & 100  \\
 & PlusOne & 100 & 100  \\
  & Compose & \red{88} & \red{100}  \\
  & Com. in-context & {100} & {100}  \\
\bottomrule

\end{tabular}
}
\end{center}
\caption{Results evaluating composite tasks on Llama3. The accuracy is shown in \%.
}
\label{tab:logical_results_llama3}
\end{table}

As shown in the \Cref{tab:logical_results_llama3}, for the \emph{separable composite tasks} which are relatively easy for model to solve \textbf{(A) + (F)}, the models show strong compositional ability: the composite accuracy is high, improves with increasing scale, and eventually reaches similar performance as the \emph{gold standard} composite in-context setting. For composite tasks with sequential reasoning steps \textbf{(A) + (B)},  the model has poor performance on a small scale but has increased performance on an increased model scale. Providing composed examples as in-context demonstrations will help the model understand and solve the composite tasks well.


\section{Formal Language Translation Tasks}\label{app:subsec:Translation Tasks}

Our translation tasks mainly follow the compositional generalization tasks in COFE~\citep{an2023context}. The details can be found in Section 4 in \citet{an2023does}. We directly take the source grammar $\mathcal{G}_s$ in COGS, which mimics the English natural language grammar, and reconstruct the target grammar $\mathcal{G}_t$ in COGS to be chain-structured.

We follow the Primitive coverage principle proposed by \citet{an2023context} that primitives contained in each test sample should be fully covered by in-context examples. Here, primitives refer to the basic, indivisible elements of expressions, including subjects, objects, and verbs. Note that multiple sets of in-context examples can meet these criteria for each test case. Across all experimental conditions, we maintain a consistent number of test instances at 800.

We use the word error rate (WER) as the metric. It measures the differences between 2 sentences. It measures the minimum number of editing operations (deletion, insertion, and substitution) required to transform one sentence into another and is common for speech recognition or machine translation evaluations. The computation of WER is divided by the number of operations by the length of ground truth.
\begin{table}[!ht]
\centering
\resizebox{1.\linewidth}{!}{
\begin{tabular}{ll}
\toprule
Original Target Grammar & Chain-Structured Target Grammar \\ \midrule
rose ( x\_1 ) AND help . theme ( x\_3 ,   x\_1 ) AND help .  & \\
agent ( x\_3 , x\_6 ) AND dog ( x\_6 ) & HELP ( DOG, ROSE, NONE )  \\
* captain ( x\_1 ) ; eat . agent ( x\_2   , x\_1 ) & EAT ( CAPTION, NONE, NONE ) \\
* dog ( x\_4 ) ; hope . agent ( x\_1 ,   Liam ) AND hope .  &   \\ 
ccomp ( x\_1 , x\_5 ) AND prefer . agent ( x\_5 , x\_4   ) &  HOPE ( LIAM, NONE, NONE ) CCOMP PREFER ( DOG, NONE, NONE ) \\ 
\bottomrule
\end{tabular}
}
\caption{Demonstration in \citet{an2023does} showing examples with the original grammar and the new chain-structured grammar.
}\label{tab:ori_grammar}
\end{table}

In formal language tasks, as mentioned in \Cref{subsec:translation}, we change the original target grammar of COGS to be chain-structured.
In \Cref{tab:ori_grammar}, we list some examples with the original target grammar and the new chain-structured grammar.
\begin{itemize}
    \item First, to distinguish the input and output tokens, we capitalize all output tokens (e.g., from ``rose'' to ``ROSE'').
    \item Second, we replace the variables (e.g., ``x\_1'') in the original grammar with their corresponding terminals (e.g., ``ROSE'').
    \item Then, we group the terminals of AGENT (e.g., ``DOG''), THEME (e.g., ``ROSE''), and RECIPIENT with their corresponding terminal of PREDICATE (e.g., ``HELP'') and combine this group of terminals in a function format, i.e., ``PREDICATE ( AGENT, THEME, RECIPIENT )''. If the predicate is not equipped with an agent, theme, or recipient in the original grammar, the corresponding new non-terminals (i.e., AGENT, THEME, and RECIPIENT, respectively) in the function format above will be filled with the terminal NONE (e.g., ``HELP ( DOG, ROSE, NONE )''). Such a function format is the minimum unit of a CLAUSE.
    \item Finally, each CLAUSE is concatenated with another CLAUSE by the terminal CCOMP (e.g., ``HOPE ( LIAM, NONE, NONE ) CCOMP PREFER ( DOG, NONE, NONE )'').
\end{itemize}

\begin{table}[ht]
\centering
\resizebox{\linewidth}{!}{
\begin{tabular}{@{}cll@{}}
\toprule
Task & \multicolumn{1}{c}{In-context Example}  & \multicolumn{1}{c}{Testing Example} \\ \midrule
\multirow{2}{*}{Passive to Active} & The book was \red{squeezed} . & Sophia \red{squeezed} the donut . \\
 & \red{SQUEEZE} ( NONE , BOOK , NONE ) & \red{SQUEEZE} ( SOPHIA , DONUT , NONE )
 \\ \midrule
\multirow{2}{*}{Object to Subject} & Henry liked a \blue{cockroach} in a box . & A \blue{cockroach} inflated a boy . \\
 & LIKE ( HENRY , IN ( \blue{COCKROACH} , BOX ) & INFLATE ( \blue{COCKROACH} , BOY , NONE )
 \\ \midrule
\multirow{2}{*}{Composite Task} & 
\begin{minipage}[t]{7cm}
The book was \red{squeezed} . \\
\red{SQUEEZE} ( NONE , BOOK , NONE ) 
\end{minipage}
& A \blue{cockroach} \red{squeezed} the hedgehog . \\
& 
\begin{minipage}[t]{7cm}
Henry liked a \blue{cockroach} in a box .\\
LIKE ( HENRY , IN ( \blue{COCKROACH} , BOX ) 
\end{minipage}
& \red{SQUEEZE} ( \blue{COCKROACH} , hedgehog , NONE ) \\ 
\bottomrule
\end{tabular}
}
\caption{Testing examples of Passive to Active and Object to Subject, \red{red} text shows the verbs changing from passive to active voice in simple tasks, and \blue{blue} text shows the nouns from objective to subjective.
}
\label{tab:pa_os_examples}
\end{table}

\begin{table}[!ht]
\centering
{\scriptsize{
\begin{tabular}{cc|l}
\toprule
Task &  & \multicolumn{1}{c}{Example} \\ \midrule
Phrase & Input & \red{The baby on a tray in the house} screamed . \\
 Recombination & Output & SCREAM ( \red{ON ( BABY , IN ( TRAY , HOUSE )} ) , NONE , NONE )
 \\ \midrule
\multirow{2}{*}{Longer Chain} & Input & 
\begin{minipage}[t]{10cm}
A girl valued \blue{that} Samuel admired \blue{that} a monkey liked \blue{that} Luna liked 
\blue{that} Oliver respected \blue{that} Savannah hoped \blue{that} a penguin noticed  
\blue{that} Emma noticed that the lawyer noticed \blue{that} a cake grew .
\end{minipage}  
 \\
& Output & 
\begin{minipage}[t]{10cm}
VALUE ( GIRL , NONE , NONE )  $\backslash$\\
\blue{CCOMP} ADMIRE ( SAMUEL , NONE , NONE )  $\backslash$\\
\blue{CCOMP} LIKE ( MONKEY , NONE , NONE )  $\backslash$\\
\blue{CCOMP} LIKE ( LUNA , NONE , NONE )  $\backslash$\\
\blue{CCOMP} RESPECT ( OLIVER , NONE , NONE )  $\backslash$\\
\blue{CCOMP} HOPE ( SAVANNAH , NONE , NONE )  $\backslash$\\
\blue{CCOMP} NOTICE ( PENGUIN , NONE , NONE )  $\backslash$\\
\blue{CCOMP} NOTICE ( EMMA , NONE , NONE )  $\backslash$\\
\blue{CCOMP} NOTICE ( LAWYER , NONE , NONE )  $\backslash$\\
\blue{CCOMP} GROW ( NONE , CAKE , NONE )
\end{minipage} 
\\ \midrule
\multirow{2}{*}{Composite Task} & Input & 
\begin{minipage}[t]{10cm}
\red{The baby on a tray in the house} valued \blue{that} Samuel admired \blue{that} a monkey liked \blue{that} Luna liked 
\blue{that} Oliver respected \blue{that} Savannah hoped \blue{that} a penguin noticed 
\blue{that} Emma noticed that the lawyer noticed \blue{that} a cake grew .
\end{minipage}  
\\
& Output & 
\begin{minipage}[t]{10cm}
VALUE ( \red{ON ( BABY , IN ( TRAY , HOUSE )} , NONE , NONE )  $\backslash$\\
\blue{CCOMP} ADMIRE ( SAMUEL , NONE , NONE )  $\backslash$\\
\blue{CCOMP} LIKE ( MONKEY , NONE , NONE )  $\backslash$\\
\blue{CCOMP} LIKE ( LUNA , NONE , NONE )  $\backslash$\\
\blue{CCOMP} RESPECT ( OLIVER , NONE , NONE )  $\backslash$\\
\blue{CCOMP} HOPE ( SAVANNAH , NONE , NONE )  $\backslash$\\
\blue{CCOMP} NOTICE ( PENGUIN , NONE , NONE )  $\backslash$\\
\blue{CCOMP} NOTICE ( EMMA , NONE , NONE )  $\backslash$\\
\blue{CCOMP} NOTICE ( LAWYER , NONE , NONE )  $\backslash$\\
\blue{CCOMP} GROW ( NONE , CAKE , NONE )
\end{minipage}  
\end{tabular}
}
}
\caption{Testing examples of Phrase Recombination and Longer Chain, \red{red} text shows the phrase serving as primitives in sentences in simple tasks, and \blue{blue} text shows the logical structures as sub-sentences in long sentences.
}
\label{tab:pr_lc_examples}
\end{table}

In the following, we provide a detailed explanation of our two composite tasks in translation tasks.

\paragraph{Passive to Active and Object to Subject Transformation.} Based on the generalization tasks identified in \citet{kim-linzen-2020-cogs}),  we select two distinct challenges for our study as two simple tasks. \textit{Passive to Active}: Transitioning sentences from Passive to Active voice. \textit{Object to Subject}: Changing the focus from Object to Subject using common nouns. These tasks serve as the basis for our composite task, where both transformations are applied simultaneously to the same sentence. Examples illustrating this dual transformation can be found in \Cref{tab:pa_os_examples}.

\paragraph{Enhanced Phrase Subject with Longer Chain.} COFE proposed two compositional generalization tasks (Figure 2 in \citet{an2023context}): \textit{Phrase Recombination (PhraReco)}: integrate a prepositional phrase (e.g., ``A in B'') into a specific grammatical role (e.g., ``subject'', ``object''); \textit{Longer Chain (LongChain)}: Extend the tail of the logical form in sentences. We consider these two generalization tasks as two simple tasks, merging them to form a composite task. In particular, we substitute the sentence subject in the Longer Chain task with a prepositional phrase from the Phrase Recombination task, creating a more complex task structure. Detailed examples of this combined task can be found in \Cref{tab:pr_lc_examples}.

\section{Theory for Confined Support} \label{app:sec:theory}

\subsection{Compositional Ability with Model Scale} \label{app:subsec:Compositional_scale}
We then show that if simple tasks have confined support, the compositional ability of language models will increase as the model scale increases.  We demonstrate this by showing that the accuracy of the model on each simple task improves with a larger model scale. 

Note that the optimal solutions of the parameter matrices are $W^{*PV}$ and $W^{*KQ}$. 
We naturally consider that the rank of the parameter matrices $W^{*PV}$ and $W^{*KQ}$ can be seen as a measure of the model's scale. A higher rank in these matrices implies that the model can process and store more information, thereby enhancing its capability. We state the following theorem.

\begin{restatable}{theorem}{modelscaleAcc}\label{thm:modelscaleAcc}
    Suppose a composite task satisfies confined support. Suppose that we have $\left(x_{1}, y_{1}, \ldots, x_{N}, y_{N}, x_{q}\right)$ as a testing input prompt and the corresponding $W$ where $y_i = Wx_i$.
    As rank $r$ decreases, $\Exp_{W, x_{1}, \cdots, x_{N}}\left[\Acc_\theta \right]$ will have a smaller upper bound.
\end{restatable}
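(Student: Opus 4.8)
\textbf{Proof proposal for Theorem~\ref{thm:modelscaleAcc}.}

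The plan is to reduce the statement about composite accuracy to a statement about the accuracy on each single simple task, and then to track how that per-task accuracy degrades when we constrain the rank of the trained attention matrices. First I would recall from the evaluation setup that $\Acc_\theta$ is the average over the $K$ coordinates of the indicator that $\operatorname{sgn}(\widehat y^{(k)}_q)=\operatorname{sgn}(y^{(k)}_q)$, so by linearity of expectation it suffices to control $\Exp[\mathbbm{1}(\operatorname{sgn}(\widehat y^{(k)}_q)=\operatorname{sgn}(y^{(k)}_q))]$ for a fixed task $k$. Under confined support and Assumption~\ref{assum:lambda}, the prediction $\widehat y^{(k)}_q$ produced by $f_{\textup{LSA},\theta}$ decomposes into a ``signal'' term that is (approximately) proportional to $\langle w^{(k)}, x_q\rangle$ through the relevant block of $W^{PV}$ and $W^{KQ}$, plus error terms of order $\eps$ and $\delta$ coming from the off-block covariance and the off-support mass of $x$. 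So the event of a correct sign prediction is, up to these small perturbations, the event that a Gaussian-type signal term keeps its sign, and its probability is an increasing function of an effective ``signal-to-noise'' quantity determined by the restriction of the optimal parameters to the task-$k$ subspace.

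Next I would make precise the claim that rank is a proxy for scale. Writing the rank-$r$ constrained optimum $W^{*PV}_r, W^{*KQ}_r$ as the best rank-$r$ approximation (in the relevant loss sense) of the full optimum, I would use the structure of the population loss $L(\theta)$ — which, following the linear-attention ICL literature cited in the paper (von Oswald et al., Mahankali et al., Zhang et al.), has a closed-form minimizer expressible through $\Lambda$ and the task second moments — to argue that the minimal achievable loss is nonincreasing in $r$, and strictly so generically, because adding a rank direction can only help fit an additional eigen-direction of the relevant covariance. The key monotonicity step is then: on each task-$k$ block, the effective signal coefficient multiplying $\langle w^{(k)}, x_q\rangle$ is a nondecreasing function of $r$ (more rank budget lets the model realize a larger component along the task's own subspace), while the noise level stays controlled by $\eps,\delta$ uniformly in $r$. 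Feeding a larger signal coefficient into the Gaussian sign-probability expression gives a larger probability of a correct sign, hence a larger per-task accuracy, hence (averaging over $k$) a larger upper bound on $\Exp[\Acc_\theta]$ as $r$ grows; contrapositively, as $r$ decreases the bound shrinks.

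I would assemble the pieces in this order: (i) linearity-of-expectation reduction to a single task; (ii) the explicit form of $\widehat y^{(k)}_q$ under confined support, isolating signal versus $O(\eps)+O(\delta)$ noise, reusing the computation already done for Theorem~\ref{propAcc}; (iii) the characterization of the rank-$r$ optimum and the monotonicity of the effective signal coefficient in $r$, using Eckart–Young-type truncation on the spectral decomposition of the optimal parameters together with the block structure from Assumption~\ref{assum:lambda}; (iv) concluding via monotonicity of the Gaussian tail/sign probability in the signal-to-noise ratio, plus a union bound over the ``high probability'' event that the perturbation terms are genuinely small.

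The main obstacle I anticipate is step (iii): pinning down the precise sense in which restricting to rank $r$ yields a predictable, monotone shrinkage of the per-task signal coefficient. The full optimal $W^{*KQ}$ couples all coordinates through $\Lambda$, so a naive truncation of its singular values need not respect the task-block decomposition, and one has to argue that the off-diagonal blocks it would spend rank budget on are themselves $O(\eps)$ — so that, up to $O(\eps)$ error, the optimal rank-$r$ solution allocates its rank within the task subspaces and the monotonicity is clean. Making that allocation argument rigorous (rather than heuristic), while keeping the $\eps,\delta$ error terms from swamping the signal when $r$ is small, is where the real work lies; everything else is bookkeeping on top of Theorem~\ref{propAcc} and standard Gaussian anti-concentration.
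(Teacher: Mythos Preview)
Your high-level plan (reduce to a single task, then show the per-task bound is monotone in $r$) matches the paper's, but the mechanism you propose for the monotonicity step is more indirect than what the paper actually does, and the part you flag as the obstacle is in fact where the paper takes a much shorter path.

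Concretely, the paper does not track a ``signal-to-noise coefficient'' and then feed it through a Gaussian sign-probability. Instead it first proves (Lemma~\ref{lem:accuracy_and_w_hat}) a single closed-form upper bound: for $K=1$, with $x_q$ isotropic, the accuracy is $1-|\wt\theta|/\pi$ where $\wt\theta$ is the angle between $w_\tau$ and the effective $\hat w$, and after a normalization step one gets $\Exp[\Acc_\theta]\le \tr(U^*\Lambda)$, where $U^*$ plays the role of $\Gamma^{-1}$ under the rank constraint. Then, rather than Eckart--Young on the full optimum, the paper proves a dedicated lemma (Lemma~\ref{thm:low_opt}) that the rank-$r$ minimizer of the population loss $\tilde\ell$ is \emph{exactly} a diagonal truncation in the eigenbasis of $\Lambda$: $U^*=Q V^* Q^\top$ with $V^*=\diag(v_1^*,\dots,v_r^*,0,\dots,0)$ and $v_i^*=N/((N+1)\lambda_i+\tr(\eigen))$. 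The proof of this lemma is exactly where your ``allocation'' worry dissolves: because $\Lambda$ and $\Gamma$ commute and simultaneously diagonalize, the loss gap (\Cref{eq:min_gap}) becomes a sum of nonnegative coordinate terms, and a simple swapping argument forces the nonzero entries of $V^*$ onto the top-$r$ eigendirections. No off-diagonal $O(\eps)$ bookkeeping or ``rank-budget allocation between task subspaces'' is needed. Plugging this $U^*$ into the trace bound gives $\tr(U^*\Lambda)=\sum_{i=1}^r N\lambda_i/((N+1)\lambda_i+\tr(\eigen))$, a sum of $r$ positive terms, so monotonicity in $r$ is immediate. The extension to the composite case is then one line, invoking the additivity relation \eqref{equa:simple_composite_acc} already established in the proof of Theorem~\ref{propAcc}.

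So your route could be made to work, but it carries unnecessary baggage: the SNR/Gaussian-tail framing, the Eckart--Young approximation (which solves the wrong optimization problem --- best Frobenius approximation of a fixed matrix, not the constrained minimizer of $\tilde\ell$), and the delicate allocation argument. The paper's two lemmas replace all of that with an exact trace formula whose monotonicity is by inspection.
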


\Cref{thm:modelscaleAcc} shows the expected accuracy of a model on composite tasks is subjected to a lower upper bound as the scale of the model diminishes. This conclusion explains why scaling up helps the performance when the model exhibits compositional ability for certain tasks (those we call ``separable composite tasks'').  One common characteristic of these tasks is their inputs display confined supports within the embeddings. This is evidenced by the model's decent performance on tasks as presented in \Cref{tab:logical_results_main} and \Cref{fig:pr_lc}, where inputs are composed of parts.

\subsection{Case Study of Confined Support} \label{app:subsec:case_study}

Our theoretical conclusion shows the model behavior regarding input embedding. It states that the model will have compositional ability if tasks are under confined support of input embedding. To illustrate such theoretical concepts and connect them to empirical observations, we specialize the general conclusion to settings that allow easy interpretation of disjoint. In this section, we provide a toy linear case study on classification tasks, showing how confined support on embedding can be decomposed and composite tasks can be solved. We assume $\delta = \epsilon = 0$ in the following simple example.

Consider that there are only two simple tasks for some random objects with the color red and blue and the shape square and round: (1) binary classification based on the color red and blue. (2) binary classification based on shape: circle and square. However, during evaluation, the composite task is a four-class classification, including red circle, red square, blue circle, and blue square.

Then we have two simple tasks $K = 2$. Consider the input embedding $x = (a,b)$, where $a \in \R^{2}, b \in \R^{2}, d = 4$. Consider $W = \begin{pmatrix}
        1 & -1 & 0 & 0 \\
        0 & 0 & 1 & -1 \\
        \end{pmatrix} $ and $y = Wx$.

Consider the inputs from simple and composite tasks as:
\begin{itemize}
    \item Task 1: Red: $x_1=(1,0,0,0),y_1=(1, 0)$ and blue: $x_2=(0,1,0,0), y_2=(-1,0)$.
    \item Task 2: Circle $x_3=(0,0,1,0)   ~~ y_3=(0, 1)$ and square $x_4=(0,0,0,1)  ~~ y_4=(0,-1)$.
    \item Composed task: red circle $x_5=(1,0,1,0), y_5=(1, 1)$, red square $x_6=(1,0,0,1), y_6=(1, -1)$, blue circle  $x_7=(0,1,1,0)  ~~ y_7=(-1, 1)$ and blue square  $x_8=(0,1,0,1)  ~~ y_8=(-1, -1)$.
\end{itemize}

Suppose that we have the optimal solution $\hat y_q$ as in \Cref{hat_y_q}. Given  $x_q = (1,0,1,0)$ as a testing input for a red circle example, During the test, we have different predictions given different in-context examples:
\begin{enumerate}
    \item Given only examples from Task 1 (red and blue): $[(x_1,y_1), (x_2,y_2)]$, we have $\widehat{y}_{q} = (1,0)$ can only classify the color as red.
    \item Given only examples from Task 2 (square and circle): $[(x_4,y_4), (x_3,y_3)]$, we have $\widehat{y}_{q} = (0,1)$ only classify the shape as a circle.
    \item Given a mixture of examples from Task 1 and 2 (red and circle): $[(x_1,y_1), (x_3,y_3)]$, we have $\widehat{y}_{q} = (1,1)$ can classify as red and circle.
\end{enumerate}
We can see that in the final setting, the model shows compositional ability. This gives a concrete example for the analysis in \Cref{propAcc}.

\section{Deferred Proof} \label{app:sec:proof}
In this section, we provide a formal setting and proof. 
We first formalize our model setup.
\subsection{Linear self-attention networks.}
These networks are widely studied~\citep{pmlr-v202-von-oswald23a, rek2023what, mahankali2023one, garg2022can, zhang2023trained, shi2023why}. Following them, we consider the following linear self-attention network with parameters $\theta = (W^{PV}, W^{KQ})$: 
\begin{align*}
    f_{\textup{LSA},\theta}(E) =  E + W^{PV} E \cdot \frac{E^\top W^{KQ} E}{N}.
\end{align*}
The prediction of the model for $x_q$ is $\hat{y}_q = [f_{\text{LSA}, \theta}(E)]_{(d+1):(d+K),N+1}$, the bottom rightmost sub-vector of $f_{\text{LSA}, \theta}(E)$ with length $K$. Let
$$
W^{PV} = \begin{pmatrix}
W^{PV}_{11} & W^{PV}_{12} \\
(W^{PV}_{21})^\top & W^{PV}_{22} 
\end{pmatrix}\in \R^{(d+K) \times (d+K)},
W^{KQ} = \begin{pmatrix}
W^{KQ}_{11} & W^{KQ}_{12} \\
(W^{KQ}_{21})^\top & W^{KQ}_{22} 
\end{pmatrix} \in \R^{(d+K) \times (d+K)},
$$
where $W^{PV}_{11} \in \R^{d\times d}$, $W^{PV}_{12}, W^{PV}_{21} \in \R^{d \times K}$, and $W^{PV}_{22}\in \R^{K \times K}$; similar for $W^{KQ}$. Then the prediction is
\begin{align} \label{hat_y_q}
    \widehat{y}_{q} 
    & {=} \begin{pmatrix}
(W^{PV}_{21})^\top & W^{PV}_{22} \end{pmatrix} \left(\frac{E E^\top}{N} \right) \begin{pmatrix}
W^{KQ}_{11}  \\
(W^{KQ}_{21})^\top 
\end{pmatrix} x_q.   
\end{align}
We observe only part of the parameters affect our prediction, so we treat the rest of them as zero in our analysis.

\subsection{Proof of Compositional Ability under Confined Support}
Here, we provide the proof of our main conclusion regarding \Cref{propAcc} and \Cref{propOverlap}. 

Without abuse of notation, we denote $U = W^{KQ}_{11}, u = W^{PV}_{22}$. 

We also add some mild assumptions. 
\begin{enumerate}
    \item The covariance matrix $\Lambda$ of simple tasks will have the same trace to prevent the scale effect of different simple tasks.
    \item The spectral norm of $\Lambda$ is bounded on both sides $m \le \|\Lambda\| \le M$.
\end{enumerate}

We first introduce the lemma where the language model only pretrained on one simple task ($K=1$). 
The pretraining loss $L(\theta)$ can be refactored, and the solution will have a closed form. We further discuss the following.

\begin{restatable}[Lemma 5.3 in~\citet{zhang2023trained}]{lemma}{lossSimple}
\label{lem:lossSimple} 
Let $\Gamma := \left(1+{1\over N}\right)\Lambda + {1\over N} \tr(\Lambda) I_{d \times d} \in \R^{d\times d}$. Let 
\begin{align*}
    \tilde{\ell}(U, u) = & \tr \left[{1\over 2} u^2 \Gamma\Lambda U \Lambda U^\top - u \Lambda^2 U^\top\right]
\end{align*}
Then
\begin{align*}
   \min_{\theta} L(\theta) = \min_{U,u}  \tilde{\ell}(U, u) + C = -\frac{1}{2} \tr [\Lambda^2 \Gamma^{-1}] + C
\end{align*}
where $C$ is a constant independent with $\theta$. For any global minimum of $\tilde{\ell}$, we have $uU = \Gamma^{-1}$.
\end{restatable}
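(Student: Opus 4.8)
\textbf{Proof proposal for \Cref{lem:lossSimple}.}

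The plan is to follow the standard linear-attention ICL calculation (as in \citet{zhang2023trained, mahankali2023one}) specialized to the single-task case $K=1$, where $u = W^{PV}_{22} \in \R$ is a scalar and $U = W^{KQ}_{11} \in \R^{d \times d}$. First I would substitute the closed form \Cref{hat_y_q} for $\widehat y_q$ into the population loss $L(\theta) = \tfrac12 \Exp[(\widehat y_q - y_q)^2]$, using $y_q = \inner{w, x_q}$ with $w \sim \cN(0, I_d)$ and $x_i \iid \cN(0,\Lambda)$. Expanding $\tfrac{EE^\top}{N}$ in terms of the in-context examples, the prediction becomes $\widehat y_q = u\left( \tfrac1N \sum_{i=1}^N x_i y_i \right)^\top U x_q$ up to the components we have zeroed out; since $y_i = \inner{w, x_i}$, this is $u\, w^\top \left( \tfrac1N \sum_i x_i x_i^\top \right) U x_q$.

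Next I would take expectations. The key moment computations are: $\Exp[x_q x_q^\top] = \Lambda$; the cross term $\Exp[\widehat y_q\, y_q]$ produces $u\, \tr(\Lambda U^\top \Lambda)$ after using $\Exp_w[w w^\top] = I_d$ and independence of $x_q$ from the in-context block; and the quadratic term $\Exp[\widehat y_q^2]$ requires the fourth-moment computation of $\tfrac1N \sum_i x_i x_i^\top$, which is exactly where the matrix $\Gamma = (1 + \tfrac1N)\Lambda + \tfrac1N \tr(\Lambda) I_d$ arises — the $(1+\tfrac1N)\Lambda$ piece from the ``diagonal'' $i=j$ contribution plus the off-diagonal terms, and the $\tfrac1N \tr(\Lambda) I_d$ piece from the Wishart-type variance. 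Collecting terms gives $L(\theta) = \tilde\ell(U,u) + C$ with $\tilde\ell(U,u) = \tr\!\left[\tfrac12 u^2 \Gamma \Lambda U \Lambda U^\top - u \Lambda^2 U^\top\right]$ and $C = \tfrac12 \Exp[y_q^2] = \tfrac12 \tr(\Lambda)$ independent of $\theta$.

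Then I would minimize $\tilde\ell$. Treating $M := uU$ as a single matrix variable (valid because $u \neq 0$ at any nontrivial stationary point), $\tilde\ell$ becomes $\tr[\tfrac12 \Gamma \Lambda M \Lambda M^\top - \Lambda^2 M^\top]$, a convex quadratic in $M$. Setting the gradient to zero, $\Gamma \Lambda M \Lambda = \Lambda^2$, and since $\Lambda \succ 0$ (by the bounded-spectrum assumption) and $\Gamma \succ 0$, this gives $M = \Gamma^{-1}\Lambda \cdot \Lambda^{-1} = \Gamma^{-1}$... more carefully, $\Gamma \Lambda M \Lambda = \Lambda^2 \Rightarrow \Lambda M \Lambda = \Gamma^{-1}\Lambda^2 \Rightarrow M = \Lambda^{-1}\Gamma^{-1}\Lambda^2 \Lambda^{-1}$; here I would use that $\Gamma$ and $\Lambda$ commute (both are polynomials in $\Lambda$, since $\Gamma = (1+\tfrac1N)\Lambda + \tfrac1N\tr(\Lambda)I_d$) to conclude $M = uU = \Gamma^{-1}\Lambda^2\Lambda^{-2} = \Gamma^{-1}$. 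Plugging back, $\min \tilde\ell = \tr[\tfrac12 \Gamma\Lambda\Gamma^{-1}\Lambda\Gamma^{-1} - \Lambda^2\Gamma^{-1}] = \tr[\tfrac12 \Lambda^2\Gamma^{-1} - \Lambda^2\Gamma^{-1}] = -\tfrac12\tr[\Lambda^2\Gamma^{-1}]$, using the commutativity and cyclicity of trace.

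The main obstacle is the fourth-moment computation $\Exp\big[\big(\tfrac1N\sum_i x_i x_i^\top\big) U x_q x_q^\top U^\top \big(\tfrac1N\sum_j x_j x_j^\top\big)\big]$ contracted against $w w^\top$: one must carefully separate the $i=j$ and $i\neq j$ index cases and apply Isserlis/Wick's theorem for Gaussian fourth moments, which is what generates the precise coefficients $(1+\tfrac1N)$ and $\tfrac1N\tr(\Lambda)$ in $\Gamma$. Everything after that is routine convex optimization and linear algebra exploiting that $\Gamma$ is a polynomial in $\Lambda$ and hence commutes with it. Since the statement attributes this to Lemma 5.3 of \citet{zhang2023trained}, I would cite that result for the moment bookkeeping and focus the exposition on the specialization to our notation.
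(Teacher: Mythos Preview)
The paper does not supply its own proof of this lemma: it is quoted verbatim as Lemma~5.3 of \citet{zhang2023trained} and then used as a black box in the proofs of \Cref{propAcc} and \Cref{thm:low_opt}. Your sketch is a correct outline of the standard linear-attention ICL computation that underlies that cited result --- expand $\widehat y_q = \tfrac{u}{N}\sum_i y_i x_i^\top U x_q$, take Gaussian expectations (the Isserlis split on $\Exp[x x^\top A x x^\top]$ is exactly what produces the $(1+\tfrac1N)\Lambda + \tfrac1N\tr(\Lambda)I$ combination), and then optimize the resulting quadratic in $M = uU$ using that $\Gamma$ and $\Lambda$ commute. The one point you skate over is why the remaining parameter blocks $W^{PV}_{21}$ and $W^{KQ}_{21}$ can be taken to vanish at the global minimum of $L(\theta)$ (so that $\min_\theta L(\theta)$ really equals $\min_{U,u}\tilde\ell(U,u)+C$); this is part of the content of the cited lemma and would need either a short symmetry/first-order argument or the explicit citation you already propose.
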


As the above lemma construction, we denote the optimal solution as $W^{*PV}$ and $W^{*KQ}$. Taking one solution as $U = \Gamma^{-1},u = 1$, we observe the minimizer of global training loss is of the form:
\begin{align}\label{optimalW}
& W^{*PV} = \begin{pmatrix}
0_{d\times d} & 0_d \\
0_d^\top & 1 
\end{pmatrix},  W^{*KQ} = \begin{pmatrix}
\Gamma^{-1} & 0_d \\
0_d^\top & 0 
\end{pmatrix}.
\end{align}

We then prove our main theory \Cref{propAcc} in \Cref{subsec:theoretical}, we first re-state below:
\propAcc*
\begin{proof}[Proof of \Cref{propAcc}]
    WLOG, consider two simple tasks, $K = 2$.
    We have $x = (a,b)$, where $a \in \R^{d_1}, b \in \R^{d_2}, d_1 + d_2 = d$.  Since $x$ only has large values on certain dimensions, it's equivalent to just consider corresponding dimensions in $w$, i.e., for simple task 1, we have $w^{(1)} = (w_a, w_{\delta b})$, for simple task 2, we have $w^{(2)} = (w_{\delta a},w_b)$.

We have $x \sim \Lambda$, where:
\[
\Lambda = 
\begin{pmatrix}
        \Lambda_{\sK\sK} & \Lambda_{\sK\sG}  \\
        \Lambda_{\sG\sK} & \Lambda_{\sG\sG}  \\
        \end{pmatrix} 
\]
\begin{itemize}
    \item Task 1: $x = (a, 0_{d_2})^\top + (0, b_\delta)^\top$, $y = (w_a^\top a, 0) + (0, 
    w_{\delta b}^\top b_\delta)$.
    \item Task 2: $x = (0_{d_1}, b)^\top + (a_\delta, 0_{d_2})^\top$, $y = (0, 
    w_b^\top b) + (w_{\delta a}^\top a_\delta),0)$.
    \item Composed task: $x = (a, b)^\top + (a_\delta, b_\delta)^\top$, $y = (w_a^\top a, w_b^\top b) + (w_{\delta a}^\top a_\delta,w_{\delta b}^\top b_\delta)$.
\end{itemize}

The form of $E$ is, 
\begin{align*}
    E  
        := &  \begin{pmatrix}
        a_{ 1} & a_{ 2} & \dots & a_{ N} & a_{ q} \\
        b_{ 1} & b_{ 2} & \dots & b_{ N} & b_{ q} \\
        y_{ 1} & y_{ 2} & \dots & y_{ N} & 0 \\
        \end{pmatrix} + E_r
        \in \R^{(d+2)\times (N+1)}.
\end{align*}
where $E_r$ represents the values caused by residual dimensions whose entries are bounded by $\delta$.

Following Equation (4.3) in \citet{zhang2023trained}, we have 
\begin{align*}
E E^\top = 
\frac{1}{N}
\begin{pmatrix}
\sum_{i=1}^{N} a_i a_i^\top +  a_{q} a_{q}^\top &
\sum_{i=1}^{N} a_i b_i^\top +  a_{q} b_{q}^\top & 
\sum_{i=1}^{N} a_i y_i^\top \\
\sum_{i=1}^{N} b_i a_i^\top +  b_{q} a_{q}^\top &
\sum_{i=1}^{N} b_i b_i^\top +  b_{q} b_{q}^\top &
\sum_{i=1}^{N} b_i y_i^\top \\
\sum_{i=1}^{N} y_i a_i^\top & 
\sum_{i=1}^{N} y_i b_i^\top &
\sum_{i=1}^{N} y_i y_i^\top \\
\end{pmatrix} + \delta \cdot o (E E^\top).
\end{align*}

The $W^{PV}$ can be presented in block matrix 
\begin{align*}
& W^{PV} = \begin{pmatrix}
W^{PV}_{11} & W^{PV}_{12} & W^{PV}_{13} \\
(W^{PV}_{21})^\top & W^{PV}_{22} & W^{PV}_{23} \\
(W^{PV}_{31})^\top & (W^{PV}_{32})^\top & W^{PV}_{33}
\end{pmatrix}\in \R^{(d_1 + d_2 + 2) \times (d_1 + d_2 + 2)}
\end{align*}

We can apply \Cref{lem:lossSimple} into optimization and recall 
\begin{align*}
W^{*KQ} = \begin{pmatrix}
\Gamma_{all}^{-1} & 0_d \\
0_d^\top & 0 
\end{pmatrix}.
\end{align*}
where $\Gamma_{all}^{-1} \in \R^{(d_1 + d_2) \times (d_1 + d_2)}$. Consider two tasks only related to disjoint dimension of $x$, we also have $\sigma(\Lambda_{\sK\sG}) = \sigma(\Lambda_{\sG\sK}) \le \epsilon$.
Denote \[
\Lambda = \tilde \Lambda + \Lambda_r
\]
where \[
 \tilde\Lambda  = 
\begin{pmatrix}
    \Lambda_{\sK\sK} &  \\
     & \Lambda_{\sG\sG}  \\
    \end{pmatrix}, 
 \Lambda_r  = 
\begin{pmatrix}
     & \Lambda_{\sK\sG}  \\
     \Lambda_{\sG\sK} &  \\
    \end{pmatrix} 
\]

We apply \Cref{lem:lossSimple} 
Recall $\Gamma := \left(1+{1\over N}\right)\Lambda + {1\over N} \tr(\Lambda) I_{d \times d} \in \R^{d\times d}$, we have:
\begin{align*}
\Gamma &= 
    \left(1+{1\over N}\right)\tilde \Lambda + {1\over N} \tr(\tilde \Lambda) I_{d \times d} + \left(1+{1\over N}\right)\Lambda_r \\
    &= \tilde\Gamma + \Gamma_r
\end{align*}
where denote $\Gamma_r = \left(1+{1\over N}\right)\Lambda_r$.
We have:
\begin{align*}
    \Gamma^{-1} = \tilde\Gamma^{-1} - \tilde\Gamma^{-1}\Gamma_r\tilde\Gamma^{-1} + \mathcal{O}(\Gamma_r)
\end{align*}

We denote
\begin{align*}
\tilde\Gamma = 
    \begin{pmatrix}
        \Gamma_1 & 0  \\
        0 & \Gamma_2  \\
    \end{pmatrix} ,
\end{align*}
where $\Gamma_1 = \left(1+{1\over N}\right)\Lambda_{\sK\sK} + {1\over N} \tr(\Lambda) I_{d_1} \in \R^{d_1\times d_1}$ and $\Gamma_2 = \left(1+{1\over N}\right)\Lambda_{\sG\sG} + {1\over N} \tr(\Lambda) I_{d_2} \in \R^{d_2\times d_2}$.
Then we have;
\begin{align*}
    \Gamma^{-1} =  \begin{pmatrix}
        \Gamma_1^{-1} & 0  \\
        0 & \Gamma_2^{-1} \\
    \end{pmatrix} + A
\end{align*}
where $\sigma(A) \le 2m^2\epsilon$.

Then, 
It's similar to applying \Cref{lem:lossSimple} for pretraining separately into dimensions corresponding to different tasks. We solve similarly to $W_{KQ}$. 

We have:
\begin{align}
f_{\theta}(E) &= 
\begin{pmatrix}
0_{d_1 \times d_1} & 0_{d_1 \times d_2}  & 0_{d_1 \times 2}\\
0_{d_2 \times d_1} & 0_{d_2 \times d_2}  & 0_{d_2 \times 2}\\
0_{2 \times d_1} & 0_{2 \times d_2}  & I_{2}
\end{pmatrix}
E E^\top
\begin{pmatrix}
\Gamma^{-1}_{1} & 0_{d_1 \times d_2}  & 0_{d_1 \times 2} \\
0_{d_2 \times d_1} & \Gamma^{-1}_{2} & 0_{d_2 \times 2} \\
0_{2 \times d_1} & 0_{2 \times d_2} & 0_{2 \times 2} \\
\end{pmatrix} 
\begin{pmatrix}
a_{q} \label{W_form}\\
b_q \\
0 
\end{pmatrix}  + \tilde A\\
\hat{y}_{q} &= \frac{1}{N}\left(  
\sum_{i=1}^{N} y_i a_i^\top , \sum_{i=1}^{N} y_i b_i^\top , \sum_{i=1}^{N} y_i y_i^\top 
 \right) 
\begin{pmatrix}
\Gamma^{-1}_{1}a_{q} \\
\Gamma^{-1}_{2}b_q \\
0 
\end{pmatrix} +v\\
&=  \left(\frac{1}{N} \sum_{i=1}^{N} y_{i} a_{i}^\top\right)\Gamma^{-1}_1 a_{q} + \left(\frac{1}{N}  \sum_{i=1}^{N} y_i b_i^\top \right)\Gamma^{-1}_2 b_q +v\\
&=\frac{1}{N}
\begin{pmatrix}
a_{q}^\top \Gamma^{-1}_{1}\sum_{i=1}^{N} y_{i}^{(1)} a_{i} \\
b_{q}^\top\Gamma^{-1}_{2} \sum_{i=1}^{N} y_{i}^{(2)} b_{i} \\
\end{pmatrix} +v.
\end{align}
where $\tilde A$ representing residual matrix whose norm can be bounded by $\mathcal{O}(m^2\epsilon\delta)$
Recall $x \sim N(0, \Lambda)$, then with high probability each entry in $v$ will be bounded by $Cm^2\delta\epsilon$ for some constant $C$.

WLOG, we write residual vectors as $0$ vector for simplicity of notation, and only consider residuals for estimations $\hat y$.
Note that we composed example $x = (a, b)^\top$, $y = (w_a^\top a, w_b^\top b)$. For simplicity, we write $\hat{w}_a = \frac{1}{N}\Gamma^{-1}_{1}\sum_{i=1}^{N} y_{i}^{(1)} a_{i}$, similarly, $\hat{w}_b = \frac{1}{N}\Gamma^{-1}_{2}\sum_{i=1}^{N} y_{i}^{(2)} b_{i}$.

Given in-context examples from one simple task only, consider that we have $N$ examples from simple task 1, $\Set_1 = \left[\left\{(a_i,0),y_i\right\}_{i=1}^N\right]$. We have $\hat{w}^{(1)} = (\hat{w}_a,0_{d_2}), \hat{w}^{(2)} = (0_{d})$, and we also have $\hat{y}_{q} = (\hat{y}_{q}^{(1)}, 0)^\top$, where
$\hat{y}_{q}^{(1)} = a_{q}^{\top} \Gamma^{-1}_1 \left(\frac{1}{N} \sum_{i=1}^{N} y_{i}^{(1)} a_{i}\right) + Cm^2\delta\epsilon$. We have $\Acc_{\theta}(\Set_1) = \frac{\mathbbm{1}\left(\wt {y}^{(1)}_{q} = y^{(1)}_{ q}\right)}{2}$. 

Similarly, for $N$ in-context examples only from task 2, we have $\hat{w}^{(1)} = (0_{d}), \hat{w}^{(2)} = (0_{d_1}, \hat{w}_b)$, $\hat{y}_{q} = (0,\hat{y}_{q}^{(2)})^\top$, where
$\hat{y}_{q}^{(2)} = a_{q}^{\top} \Gamma^{-1}_2 \left(\frac{1}{N} \sum_{i=1}^{N} y_{i}^{(2)} b_{i}\right)+ Cm^2\delta\epsilon$.
We have $\Acc_{\theta}(\Set_2) = \frac{\mathbbm{1}\left(\wt {y}^{(2)}_{q} = y^{(2)}_{ q}\right)}{2}$. 

Then we have $\Set_{1 \cup 2}$ contains $2N$ in-context examples from both tasks, specifically, we have $N$ from task 1 and rest from task 2. We have  $\hat{w}^{(1)} = (\hat{w}_a/2,0_{d_2}), \hat{w}^{(2)} = (0_{d_1}, \hat{w}_b/2)$, $\hat{y}_{q} = (\hat{y}_{q}^{(1)},\hat{y}_{q}^{(2)})^\top$. 

Since $y^{(k)}_{\tau, q} = \text{sgn} (\langle w_{\tau}, x_{\tau, q} \rangle)$, $\wt y^{(k)}_{\tau, q} = \text{sgn} \left(\widehat y^{(k)}_{\tau, q} \right)$, following the proof of \Cref{lem:accuracy_and_w_hat}, where $\Acc_\theta$ only concerns the direction of $\hat{w}$ and $w$, we have $\Acc_{\theta}(\Set_{1 \cup 2}) = \frac{\mathbbm{1}\left(\wt {y}^{(1)}_{q} = y^{(1)}_{q}\right) + \mathbbm{1}\left(\wt {y}^{(2)}_{q} = y^{(2)}_{q}\right)}{2}$.

Extending the above analysis into any of two simple tasks, when the composite task integrates them, we have
\begin{align} \label{equa:simple_composite_acc}
        \Acc_{\theta}(\Set_k) +  \Acc_{\theta}(\Set_g) \le \Acc_{\theta}(\Set_{k \cup g}).
\end{align}

\end{proof}

We then prove \Cref{propOverlap}, and we first restate it below.

\propOverlap*

\begin{proof}[Proof of \Cref{propOverlap}]
     WLOG, consider two simple tasks, $K = 2$. 
    We have $x = (a,b)$, where $a \in \R^{d_1}, b \in \R^{d_2}, d_1 + d_2 = d$.  Consider the setting where $w$ also have the same active dimensions, i.e., for simple task 1, we have $w^{(1)} = (w_a, 0)$, for simple task 2, we have $w^{(2)} = (0,w_b)$.

We have $x \sim \Lambda$. Consider tasks are overlapping on all dimensions, where:
\begin{itemize}
    \item Task 1: $x = (a^{(1)}, b^{(1)})^\top$, $y = (w_a^\top a^{(1)}, w_b^\top b^{(1)})$.
    \item Task 2: $x = (a^{(2)}, b^{(2)})^\top$, $y = (w_a^\top a^{(2)}, 
    w_b^\top b^{(2)})$.
    \item Composed task: $x = (a, b)^\top$, $y = (w_a^\top a, w_b^\top b)$.
\end{itemize}

Similarly, we have:
\begin{align}
\hat{y}_{q} &= \frac{1}{N}\left(  
\sum_{i=1}^{N} y_i a_i^\top , \sum_{i=1}^{N} y_i b_i^\top , \sum_{i=1}^{N} y_i y_i^\top 
 \right) 
\begin{pmatrix}
\Gamma^{-1}_{1}a_{q} \\
\Gamma^{-1}_{2}b_q \\
0 
\end{pmatrix}\\
&=  \left(\frac{1}{N} \sum_{i=1}^{N} y_{i} a_{i}^\top\right)\Gamma^{-1}_1 a_{q} + \left(\frac{1}{N}  \sum_{i=1}^{N} y_i b_i^\top \right)\Gamma^{-1}_2 b_q \\
&=\frac{1}{N}
\begin{pmatrix}
a_{q}^\top \Gamma^{-1}_{1}\sum_{i=1}^{N} y_{i}^{(1)} a_{i} + b_{q}^\top\Gamma^{-1}_{2} \sum_{i=1}^{N} y_{i}^{(1)} b_{i}\\
a_{q}^\top \Gamma^{-1}_{1}\sum_{i=1}^{N} y_{i}^{(2)} a_{i} + b_{q}^\top\Gamma^{-1}_{2} \sum_{i=1}^{N} y_{i}^{(2)} b_{i} \label{eq:composedSol}\\
\end{pmatrix}.
\end{align}

Note that composed example $x = (a, b)^\top$, $y = (w_1^\top a, w_2^\top b)$. 

When in-context examples are from a simple task, we have $N$ examples from simple task 1, $\Set_1 = \left[\left\{(a_i^{(1)},b_i^{(1)}),y_i\right\}_{i=1}^N\right]$, and $\hat{y}_{q}$ has the same form as \Cref{eq:composedSol}, similarly, for task 2.

Suppose $\Set_{1 \cup 2}$ contains $2N$ examples from both tasks, where $N$ from task 1 and rest from task 2. We have 
\begin{align}
    \hat{y}_q &=  \frac{1}{2N}
\begin{pmatrix}
 a_{q}^\top \Gamma^{-1}_{1}\sum_{i=1}^{N} y_{i}^{(1)} a_{i} + b_{q}^\top\Gamma^{-1}_{2} \sum_{i=1}^{N} y_{i}^{(1)} b_{i}\\
 a_{q}^\top \Gamma^{-1}_{1}\sum_{i=1}^{N} y_{i}^{(2)} a_{i} + b_{q}^\top\Gamma^{-1}_{2} \sum_{i=1}^{N} y_{i}^{(2)} b_{i} 
\end{pmatrix}.\label{eq:composedSol2}
\end{align}
We finish the proof by checking that \Cref{eq:composedSol} and \Cref{eq:composedSol2} share the same direction. 

\end{proof}

\subsection{Proof of Compositional Ability with Model Scale}
Here, we provide the proof of our conclusions in \Cref{thm:modelscaleAcc} in \Cref{app:subsec:Compositional_scale} with respect to model performance and model scale. We first introduce a lemma under the $K=1$ setting.

\subsubsection{Accuracy under \texorpdfstring{$K=1$}{} }

When $K=1$, we can give an upper bound of accuracy by $\Lambda$ and $\Gamma$.
Taking into account the optimal solution in \Cref{optimalW}, we have the following accuracy lemma.
\begin{restatable}{lemma}{accuracyAndWHat}
\label{lem:accuracy_and_w_hat}
Consider $K=1$ and $x_q \sim \mathcal{N}(0,I_d)$. When $N > C$, where $C$ is a constant, we have 
\[
\Exp_{w_{\tau}, x_{1}, \cdots, x_{N}} \left[\Acc_\theta \right] \le  \tr ( \Gamma^{-1} \Lambda ).
\]
\end{restatable}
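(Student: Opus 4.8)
The plan is to evaluate the prediction at the closed-form minimiser of \Cref{lem:lossSimple}, use the independence of the query to rewrite the expected accuracy as an expectation of an angle, and then control that angle by combining a concentration bound on the in-context sample covariance with a convexity estimate for $\arccos$ and a Gaussian integral over the task weight. Concretely, setting $K=1$ in \eqref{optimalW} leaves only the scalar $W^{PV}_{22}=1$ and the block $W^{KQ}_{11}=\Gamma^{-1}$, so \eqref{hat_y_q} together with $y_i=\langle w,x_i\rangle$ gives $\widehat y_q=\langle\widehat w,x_q\rangle$ with $\widehat w=\Gamma^{-1}\widehat\Lambda\, w$, where $\widehat\Lambda:=\frac1N\sum_{i=1}^N x_ix_i^\top$. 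Since $x_q\sim\mathcal N(0,I_d)$ is independent of $(w,x_{1:N})$ and $y_q=\langle w,x_q\rangle$, conditioning on $(w,x_{1:N})$ and applying the standard orthant identity for two linear functionals of an isotropic Gaussian yields $\Pr_{x_q}\!\big(\operatorname{sgn}\widehat y_q=\operatorname{sgn} y_q\big)=1-\frac1\pi\,\angle(\widehat w,w)$, the angle between $\widehat w$ and $w$; hence $\Exp[\Acc_\theta]=\Exp_{w,x_{1:N}}\!\big[1-\frac1\pi\angle(\widehat w,w)\big]$.

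Next, for $N$ larger than an absolute constant, sub-exponential matrix concentration gives $\widehat\Lambda=\Lambda+\Delta$ with $\|\Delta\|$ as small as desired on a high-probability event. Since $\Lambda$ and $\Gamma$ commute, $A:=\Gamma^{-1}\Lambda$ is symmetric positive definite with spectrum in $(0,1)$, so on that event $\widehat w=Aw+O(\|\Delta\|\,\|w\|)$ and $\langle\widehat w,w\rangle=w^\top Aw+O(\|\Delta\|\,\|w\|^2)\ge0$, i.e.\ $\cos\angle(\widehat w,w)\ge0$. On $t\in[0,1]$ the map $t\mapsto 1-\frac1\pi\arccos t$ is convex and agrees with the line through its endpoints, so $1-\frac1\pi\arccos t\le\frac12(1+t)$; applying this with $t=\cos\angle(\widehat w,w)$ and then taking $\Exp_w$ over $w\sim\mathcal N(0,I_d)$, using $\Exp_w\!\big[w^\top Aw/\|w\|^2\big]=\frac1d\tr A$ and the spectral bounds $m\le\|\Lambda\|\le M$ (and equal traces) to clear the $\|Aw\|$ denominator, collapses the bound to $\tr(\Gamma^{-1}\Lambda)$ after absorbing the $O(\|\Delta\|)$ terms and the low-probability complement into the constant $C$.

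The delicate point is this last step: the accuracy depends on the random cosine $w^\top Aw/(\|Aw\|\,\|w\|)$ through $\arccos$, and its denominator couples all eigendirections of $A=\Gamma^{-1}\Lambda$, so the expectation over $w$ cannot simply be pushed inside; the work is to pick the convexity/Cauchy--Schwarz chain that linearises this quantity to $w^\top Aw$ and produces precisely the trace, and to check that replacing $\widehat\Lambda$ by $\Lambda$ costs only $o(1)$ once $N>C$. The same computation---prediction direction $\widehat w=\Gamma^{-1}\widehat\Lambda w$, accuracy governed by $\angle(\widehat w,w)$---is what drives the scaling statement \Cref{thm:modelscaleAcc}: restricting $W^{KQ}_{11}$ to rank $r$ replaces $\Gamma^{-1}\Lambda$ by its compression onto an $r$-dimensional eigenspace, which can only shrink the resulting bound.
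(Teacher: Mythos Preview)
Your overall scheme matches the paper's: reduce $\widehat y_q$ to $\langle\widehat w,x_q\rangle$ with $\widehat w=\Gamma^{-1}\widehat\Lambda\,w$, use the isotropy of $x_q$ to rewrite the conditional accuracy as $1-\tfrac{1}{\pi}\angle(\widehat w,w)$, and then invoke large $N$ so that this angle is small. The paper carries out exactly these three steps.

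Where you diverge is the comparison inequality and, as a consequence, the final ``collapse''. The paper does not use the secant line; it uses the pointwise bound $1-\theta/\pi\le\cos\theta$, valid for $\theta\in[0,\pi/6]$ (this is precisely why the hypothesis $N>C$ is needed), so the expected accuracy is bounded directly by $\Exp[\cos\tilde\theta]$. It then identifies that expectation---after the normalization ``w.l.o.g.\ $\|w_\tau\|=\|\widehat w\|=1$''---with the unnormalized inner-product computation $\Exp_{w,x_{1:N}}[\langle\widehat w,w\rangle]=\Exp_w[\langle\Gamma^{-1}\Lambda w,w\rangle]=\tr(\Gamma^{-1}\Lambda)$. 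Your secant-line bound $1-\tfrac{1}{\pi}\arccos t\le\tfrac{1}{2}(1+t)$ is strictly weaker and leaves an affine shell $\tfrac{1}{2}(1+\cdot)$ around the cosine. Combined with the identity you quote, $\Exp_w[w^\top Aw/\|w\|^2]=\tfrac{1}{d}\tr A$, and any spectral bound used to ``clear'' $\|Aw\|$, you arrive at something of the form $\tfrac{1}{2}\bigl(1+\tfrac{c}{d}\tr A\bigr)$, not $\tr A$; no Cauchy--Schwarz/convexity chain will remove the $\tfrac{1}{2}(1+\cdot)$ and the $1/d$ simultaneously to land exactly on $\tr(\Gamma^{-1}\Lambda)$. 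So the ``collapse'' you describe does not go through as written; you should replace the secant inequality by the paper's sharper small-angle inequality $1-\theta/\pi\le\cos\theta$ on $[0,\pi/6]$, which is what actually produces the bare cosine and hence the trace. (In fairness, the paper's own identification of $\Exp[\cos\tilde\theta]$ with the \emph{unnormalized} inner product is itself informal---the normalization in the cosine is handled by fiat---but that is the step that yields $\tr(\Gamma^{-1}\Lambda)$, and your route does not reach it.)
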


\begin{proof}[Proof of \Cref{lem:accuracy_and_w_hat}]

Since $K=1$, the problem reduces to the linear regression problem in ICL. Consider the solution form in \Cref{lem:lossSimple}, we have 
\[
\hat{y}_q = x_q^\top \frac{1}{N}\Gamma^{-1}\sum_{i=1}^{N} \inner{w_\tau, x_i} x_{i}
\]
We re-write the form as $\hat{y}_q = x_q^\top \hat{w}$.
Following Equation (4.3) in \citet{zhang2023trained}, we have:

\[
\hat w = \frac{1}{N}\Gamma^{-1}\sum_{i=1}^{N} \inner{w_\tau, x_i} x_{i}.
\]

Recall the definition of $\Acc_{\theta}$ and $y^{(k)}_{\tau, q} = \text{sgn} (\langle w_{\tau}, x_{\tau, q} \rangle)$, $\wt y^{(k)}_{\tau, q} = \text{sgn} \left(\widehat y^{(k)}_{\tau, q} \right) = \text{sgn} (\langle \hat w, x_{\tau, q} \rangle)$, for any $\alpha > 0$, we have:
\begin{align*}
    \Exp_{w_{\tau}, x_{1}, \cdots, x_{N}, x_{q}} \left[\Acc_\theta \right]  
    &=  P\left( \inner{x_{q}, w_\tau }>0,  \inner{x_{q} ,\alpha\hat{w}}>0\right) +P\left(\inner{x_{q}, w_\tau} < 0,  \inner{x_{i} ,\alpha \hat{w}}<0\right). 
\end{align*}

Denote hyperplane orthogonal to $w$ as $\mathcal{P}_w$ and similar to $\mathcal{P}_{\hat w}$.
Recall that $x_q$ is independent of other samples.
We have the expectation conditioned on $w_{\tau}, x_{1}, \cdots, x_{N}$ is the probability that $x_q$ falls out of the angle between $\mathcal{P}_w$ and $\mathcal{P}_{\hat w}$. Denote the angle between $w$ and $\hat w$ as $\wt \theta$. As $x_q$ is uniform along each direction (uniform distribution or isotropic Gaussian), then the probability is $1-\frac{|\wt\theta|}{\pi}$ given $w_{\tau}, x_{1}, \cdots, x_{N}$. Then $\Exp_{w_{\tau}, x_{1}, \cdots, x_{N}} \left[\Acc_\theta \right] = \Exp_{w_{\tau}, x_{1}, \cdots, x_{N}}\left[1- {|\wt\theta| \over \pi}\right]$.
Note that \[
\Exp_{w_{\tau}, x_{1}, \cdots, x_{N}} \left [\cos (\wt \theta)\right] = \left< {w_\tau \over \|w_\tau\|_2}, {\hat w \over \|\hat w\|_2}
\right>.
\]

As, we can choose $\alpha$,  w.l.o.g, we take $\|w_{\tau}\| = \|\hat w\| = 1$, then we have
\begin{align*}
    \Exp_{w_{\tau}, x_{1}, \cdots, x_{N}} \left [\cos (\wt \theta)\right] 
    &= \Exp_{w_{\tau}} \left[ \Exp_{x_{1}, \cdots, x_{N}} \left[ \left< {w_\tau }, {\hat w }\right> | w_\tau \right] 
    \right].
\end{align*}
Given $w_\tau$, we have 
\begin{align*}
E[\hat{w} | w_\tau]&=\frac{1}{N}\Gamma^{-1} \sum_{i=1}^{N} E\left[\left\langle w_{\tau}, x_{i}\right\rangle x_{i} | w_\tau \right] \\
&=\frac{1}{N}\Gamma^{-1} \sum_{i=1}^{N} \Lambda w_{\tau} \\
&=  \Gamma^{-1} \Lambda w_\tau.
\end{align*}
Then, we have
\begin{align*}
E_{w_\tau}\left[\left\langle\hat{w}, w_{\tau}\right\rangle\right]
&=\left\langle   \Gamma^{-1} \Lambda w_{\tau}^{\top}, w_{\tau}\right\rangle \\ 
&= \tr ( \Gamma^{-1} \Lambda ).
\end{align*}

Thus, we have 
\begin{align} \label{cos_hatTheta}
\Exp \cos(\wt \theta) &= \tr  ( \Gamma^{-1} \Lambda ) \\
\Exp\left[\Acc_\theta \right] &= \Exp \left[1- {|\wt\theta| \over \pi} \right].
\end{align}

Note that when $\theta \le {\pi \over 6}$, we have $1- {|\wt\theta| \over \pi} \le \cos(\theta)$.
Thus, as $N > C$ where $C$ is constant, we have $\hat w$ and $w_\tau$ are closed and satisfy $\theta \le {\pi \over 6}$. Then we get the statement.
\end{proof}

\subsubsection{Model scale on composite tasks}

Here, we present proof for model scale and performance on composite tasks. Recall we consider the rank of $W^{*PV}$ and $W^{*KQ}$ as a measure of the model's scale.

We first introduce a lemma about $U$ as an optimal full-rank solution.
\begin{lemma}[Corollary A.2 in~\citet{zhang2023trained}]\label{lem:min}
The loss function $\tilde{\ell}$ in \Cref{lem:lossSimple} satisfies
\begin{align*}
    \min_{U \in \R^{d\times d}, u \in \R} \tilde{\ell}(U, u) = -{1\over 2}\tr[\Lambda^2\Gamma^{-1}],
\end{align*}
where $U = c\Gamma^{-1}, u={1\over c}$ for any non-zero constant $c$ are minimum solution. We also have
\begin{align}
   \tilde{\ell}(U, u) -  \min_{U \in \R^{d\times d}, u \in \R} \tilde{\ell}(U, u) = {1\over 2} \left\|\Gamma^{1\over 2} \left(u \Lambda^{1\over 2}U \Lambda^{1\over 2} -\Lambda\Gamma^{-1} \right) \right\|_F^2. \label{eq:min_gap}
\end{align}
\end{lemma}

As the scale of the model decreases, the rank of $U$ also reduces, leading to an optimal reduced rank solution $\wt U$. Our findings reveal that this reduced rank $\wt U$ can be viewed as a truncated form of the full-rank solution $U$. This implies that smaller-scale models are essentially truncated versions of larger models, maintaining the core structure but with reduced complexity.

Recall $\Lambda$ is the covariance matrix, we have eigendecomposition $\Lambda = Q \eigen Q^\top$, where $Q$ is an orthonormal matrix containing eigenvectors of $\Lambda$ and $\eigen$ is a sorted diagonal matrix with non-negative entries containing eigenvalues of $\Lambda$, denoting as $\eigen = \diag([\lambda_1, \dots, \lambda_d])$, where $\lambda_{1} \ge \dots \ge \lambda_{d} \ge 0$. We introduce the lemma below.

\begin{restatable}[Optimal rank-$r$ solution]{lemma}{lowopt}\label{thm:low_opt}
Recall the loss function $\tilde{\ell}$ in (\Cref{lem:lossSimple}). Let 
\begin{align*}
    U^*, u^* = \argmin_{U \in \R^{d\times d}, \rank(U) \le r, u \in \R} \tilde{\ell}(U, u).
\end{align*}
Then $U^* = c Q V^* Q^\top, u = {1\over c}$, where $c$ is any non-zero constant and $V^* = \diag([v^*_1, \dots, v^*_d])$ is satisfying for any $i \le r, v^*_i = {N \over {\left(N+1 \right)\lambda_i + {\tr(\eigen)}}}$ and for any $i > r, v^*_i = 0$.  
\end{restatable}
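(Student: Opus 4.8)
The plan is to diagonalize the objective in the eigenbasis of $\Lambda$, reduce the rank-constrained minimization to a scalar selection problem, and then read off the rank-$r$ constraint as a truncation to the top eigendirections.

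First, I would invoke \Cref{lem:min}: by \eqref{eq:min_gap}, $\tilde\ell(U,u) = \tfrac12\,\|\Gamma^{1/2}(u\Lambda^{1/2}U\Lambda^{1/2} - \Lambda\Gamma^{-1})\|_F^2 + C_0$ for a constant $C_0$ independent of $(U,u)$, so minimizing $\tilde\ell$ over $\{\rank(U)\le r,\ u\in\R\}$ is equivalent to minimizing $g(U,u) := \|\Gamma^{1/2}(u\Lambda^{1/2}U\Lambda^{1/2} - \Lambda\Gamma^{-1})\|_F^2$ over the same set. Using $\Lambda = Q\eigen Q^\top$ and noting that $\Gamma = (1+\tfrac1N)\Lambda + \tfrac1N\tr(\eigen)I$ is diagonalized by the same $Q$, I would write $\Gamma = Q\widetilde\Gamma Q^\top$ with $\widetilde\Gamma = \diag([\gamma_1,\dots,\gamma_d])$, $\gamma_i = (1+\tfrac1N)\lambda_i + \tfrac1N\tr(\eigen)$; observe that $1/\gamma_i$ is exactly the claimed $v_i^\ast$. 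Then substitute $U = QVQ^\top$ (a bijective, rank-preserving change of variable, since $Q$ is orthogonal) and use orthogonal invariance of $\|\cdot\|_F$ to obtain $g = \|\widetilde\Gamma^{1/2}(u\,\eigen^{1/2}V\eigen^{1/2} - \eigen\widetilde\Gamma^{-1})\|_F^2$, in which every factor except $V$ is diagonal.

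Second, I would expand this entrywise. The $(i,j)$ term with $i\neq j$ equals $u^2\gamma_i\lambda_i\lambda_j V_{ij}^2\ge 0$, minimized by $V_{ij}=0$ (directions with $\lambda_i\lambda_j=0$ contribute nothing and may be zeroed freely), so an optimal $V$ may be taken diagonal. With $V$ diagonal, $g = \sum_{i=1}^d \lambda_i^2\gamma_i^{-1}\,(u\gamma_i V_{ii}-1)^2$. Under $\rank(V)\le r$, at most $r$ of the $V_{ii}$ are nonzero; for each such index the choice $V_{ii}=1/(u\gamma_i)$ kills its summand, while every zero index contributes exactly $\lambda_i^2\gamma_i^{-1}$. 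Hence $\min g = \min_{S\subseteq[d],\,|S|\le r}\ \sum_{i\notin S}\lambda_i^2\gamma_i^{-1}$, attained by supporting $V$ on $S$.

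Third, a one-line derivative check shows that $t\mapsto t^2/\big((1+\tfrac1N)t + \tfrac1N\tr(\eigen)\big)$ is nondecreasing on $[0,\infty)$, so $\lambda_1^2\gamma_1^{-1}\ge\cdots\ge\lambda_d^2\gamma_d^{-1}$ and the optimal $S$ is $\{1,\dots,r\}$. Unwinding the substitution, the optimum has $V_{ii}=1/(u\gamma_i)$ for $i\le r$ and $V_{ii}=0$ for $i>r$; writing $u=1/c$ gives $V_{ii}=c/\gamma_i = c\,v_i^\ast$, i.e. $V = cV^\ast$, hence $U^\ast = cQV^\ast Q^\top$ and $u^\ast = 1/c$, with the free nonzero constant $c$ reflecting the scaling invariance $(U,u)\mapsto(cU,u/c)$ of $\tilde\ell$ (cf. \Cref{lem:lossSimple}). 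The main obstacle I anticipate is exactly this third step: one must argue that the rank budget acts precisely as truncating the unconstrained minimizer $\Gamma^{-1}$ to its top-$r$ eigendirections, which hinges on the monotonicity of $\lambda\mapsto\lambda^2/\gamma(\lambda)$; the reduction to a diagonal $V$ also needs mild care when $\Lambda$ is rank-deficient, and one should flag that the solution is \emph{an} optimum rather than the unique one (uniqueness fails when eigenvalues coincide, or for the coordinates beyond the first $r$). The remaining steps are routine diagonalization and entrywise algebra.
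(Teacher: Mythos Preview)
Your proposal is correct and mirrors the paper's proof: reduce via \Cref{lem:min} to the Frobenius gap \eqref{eq:min_gap}, pass to the eigenbasis $Q$ of $\Lambda$ (the paper sets $V=uQ^\top UQ$, absorbing $u$), argue the optimizer is diagonal, and then use monotonicity of $t\mapsto t^2/\big((1+\tfrac1N)t+\tfrac1N\tr(\eigen)\big)$ to pick the top-$r$ indices (the paper phrases this last step as a swap contradiction, you as a direct set-selection, but the content is identical). The one soft spot---your ``zero the off-diagonals'' move can in principle raise the rank of $V$, so it does not by itself force the rank-constrained optimizer to be diagonal---is exactly the same handwave the paper makes when it simply asserts ``We can see that $V^*$ is a diagonal matrix''; it is easily closed, e.g., by Eckart--Young after substituting $M=\eigen'^{1/2}\eigen^{1/2}V\eigen^{1/2}$.
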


Then, we proof the \Cref{thm:low_opt}

\begin{proof}[Proof of \Cref{thm:low_opt}]
Note that, 
\begin{align*}
    \argmin_{U \in \R^{d\times d}, \rank(U) \le r, u \in \R} \tilde{\ell}(U, u) = & \argmin_{U \in \R^{d\times d}, \rank(U) \le r, u \in \R} \tilde{\ell}(U, u) -  \min_{U \in \R^{d\times d}, u \in \R} \tilde{\ell}(U, u)\\
    = &  \argmin_{U \in \R^{d\times d}, \rank(U) \le r, u \in \R} \left(\tilde{\ell}(U, u) -  \min_{U \in \R^{d\times d}, u \in \R} \tilde{\ell}(U, u)\right).
\end{align*}
Thus, we may consider \Cref{eq:min_gap} in \Cref{lem:min} only. 
On the other hand, we have 
\begin{align*}
    \Gamma = &  \left(1+{1\over N}\right)\Lambda + {1\over N} \tr(\Lambda) I_{d \times d} \\
    = & \left(1+{1\over N}\right)Q \eigen Q^\top + {1\over N} \tr(\eigen) Q I_{d \times d} Q^\top\\
    = & Q \left(\left(1+{1\over N}\right) \eigen + {1\over N} \tr(\eigen) I_{d \times d} \right) Q^\top. 
\end{align*}
We denote $\eigen' = \left(1+{1\over N}\right) \eigen + {1\over N} \tr(\eigen) I_{d \times d}$. We can see $\Lambda^{1 \over 2} = Q {\eigen}^{ 1\over2}Q^\top$, $\Gamma^{1 \over 2} = Q {\eigen'}^{ 1\over2}Q^\top$, and $\Gamma^{-1} = Q {\eigen'}^{-1}Q^\top$. We denote $V = uQ^\top UQ$. Since $\Gamma$ and $\Lambda$ are commutable and the Frobenius norm (F-norm) of a matrix does not change after multiplying it by an orthonormal matrix, we have \Cref{eq:min_gap} as
\begin{align*}
    \tilde{\ell}(U, u) -  \min_{U \in \R^{d\times d}, u \in \R} \tilde{\ell}(U, u) 
    = & {1\over 2} \left\|\Gamma^{1\over 2} \left(u \Lambda^{1\over 2} U \Lambda^{1\over 2} -\Lambda\Gamma^{-1} \right) \right\|_F^2 \\
    = & {1\over 2} \left\|\Gamma^{1\over 2}\Lambda^{1\over 2} \left(u  U  - \Gamma^{-1} \right)\Lambda^{1\over 2} \right\|_F^2 \\
    = & {1\over 2} \left\|{\eigen'}^{ 1\over2} {\eigen}^{ 1\over2} \left({V}  -  {\eigen'}^{-1} \right) {\eigen}^{ 1\over2} \right\|_F^2.
\end{align*}
As $W^{KQ}$ is a matrix whose rank is at most $r$, we have $V$ is also at most rank $r$. Then, we denote $V^* = \argmin_{V \in \R^{d\times d}, \rank(V) \le r} \left\|{\eigen'}^{ 1\over2} {\eigen}^{ 1\over2} \left({V}  -  {\eigen'}^{-1} \right) {\eigen}^{ 1\over2} \right\|_F^2$. We can see that $V^*$ is a diagonal matrix. Denote $\eigen' = \diag([\lambda_1', \dots, \lambda_d'])$ and $V^* = \diag([v^*_1, \dots, v^*_d])$. Then, we have 
\begin{align}
    & \left\|{\eigen'}^{ 1\over2} {\eigen}^{ 1\over2} \left({V}  -  {\eigen'}^{-1} \right) {\eigen}^{ 1\over2} \right\|_F^2 \\
    = & \sum_{i=1}^d \left({\lambda_i'}^{1\over 2}\lambda_i\left(v^*_i - {1\over {\lambda_i'}}\right) \right)^2\\
    = & \sum_{i=1}^d  \left(\left(1+ {1\over N} \right)\lambda_i + {\tr(\eigen)\over N}\right)\lambda_i^2\left(v^*_i - {1\over {\left(1+ {1\over N} \right)\lambda_i + {\tr(\eigen)\over N}}}\right)^2. \label{eq:loss_detail}
\end{align}
As $V^*$ is the minimum rank $r$ solution, we have that $v^*_i \ge 0$ for any $i \in [d]$ and if $v^*_i > 0$, we have $v^*_i = {1\over {\left(1+ {1\over N} \right)\lambda_i + {\tr(\eigen)\over N}}}$. Denote $g(x) = \left(\left(1+ {1\over N} \right)x + {\tr(\eigen)\over N}\right)x^2\left({1\over {\left(1+ {1\over N} \right)x + {\tr(\eigen)\over N}}}\right)^2 = x^2\left({1\over {\left(1+ {1\over N} \right)x + {\tr(\eigen)\over N}}}\right)$. It is easy to see that $g(x)$ is an increasing function on $[0,\infty)$. Now, we use contradiction to show that $V^*$ only has non-zero entries in the first $r$ diagonal entries. Suppose $i>r$, such that $v^*_i > 0$, then we must have $j \le r$ such that $v^*_j = 0$ as $V^*$ is a rank $r$ solution. We find that if we set $v^*_i = 0, v^*_j = {1\over {\left(1+ {1\over N} \right)\lambda_j + {\tr(\eigen)\over N}}}$ and all other values remain the same, \Cref{eq:loss_detail} will strictly decrease as $g(x)$ is an increasing function on $[0,\infty)$. Thus, here is a contradiction. We finish the proof by $V^* = uQ^\top U^* Q$. 
\end{proof}

We then ready to prove the \Cref{thm:modelscaleAcc} in \Cref{app:subsec:Compositional_scale}, we first re-state it below.
\modelscaleAcc*
\begin{proof}[Proof of \Cref{thm:modelscaleAcc}]
We first prove in a simple task setting ($K=1$), that the accuracy will have such a conclusion. By \Cref{lem:accuracy_and_w_hat}, consider $x_q \sim \mathcal{N}(0,I_d)$. When $N > C$, where $C$ is a constant, we have 
\[
\Exp_{w_{\tau}, x_{1}, \cdots, x_{N}} \left[\Acc_\theta \right] \le  \tr ( \Gamma^{-1} \Lambda ).
\]

Recall \Cref{thm:low_opt}. WLOG, we take $c = 1$. We have
\begin{align*}
\tr  ( \Gamma^{-1} \Lambda ) 
&= \tr \left(
QV^*DQ
\right) \\
&= \sum_{i=1}^r \frac{N}{N+1+\sum_{j=1}^r\frac{\lambda_j}{\lambda_i}},
\end{align*}
where second equation comes from \Cref{thm:low_opt}.

Under the confined support setting, the same conclusion holds since \Cref{equa:simple_composite_acc} in the proof of \Cref{propAcc}.

\end{proof}

\end{document}